\def\eqref#1{equation~\ref{#1}}
\def\1{\bm{1}}
\def\vu{{\bm{u}}}
\def\vv{{\bm{v}}}
\def\vz{{\bm{z}}}
\DeclareMathAlphabet{\mathsfit}{\encodingdefault}{\sfdefault}{m}{sl}
\SetMathAlphabet{\mathsfit}{bold}{\encodingdefault}{\sfdefault}{bx}{n}
\newtheorem{lemma}{Lemma}
\newtheorem{theorem}{Theorem}
\newtheorem{corollary}{Corollary}
\newtheorem{proposition}{Proposition}
\title{Understanding Dimensional Collapse in Contrastive Self-supervised Learning}
\author{Li Jing,
Pascal Vincent, Yann LeCun, Yuandong Tian\\
Facebook AI Research\\
\texttt{\{ljng, pascal, yann, yuandong\}@fb.com} \\
}
\begin{document}

\maketitle

\begin{abstract}
Self-supervised visual representation learning aims to learn useful representations without relying on human annotations. Joint embedding approach bases on maximizing the agreement between embedding vectors from different views of the same image. Various methods have been proposed to solve the collapsing problem where all embedding vectors collapse to a trivial constant solution. Among these methods, contrastive learning prevents collapse via negative sample pairs. It has been shown that non-contrastive methods suffer from a lesser collapse problem of a different nature: dimensional collapse, whereby the embedding vectors end up spanning a lower-dimensional subspace instead of the entire available embedding space. Here, we show that dimensional collapse also happens in contrastive learning. In this paper, we shed light on the dynamics at play in contrastive learning that leads to dimensional collapse. Inspired by our theory,  we propose a novel contrastive learning method, called \textit{DirectCLR}, which directly optimizes the representation space without relying on an explicit trainable projector. Experiments show that \textit{DirectCLR} outperforms SimCLR with a trainable linear projector on ImageNet. 
\end{abstract}

\section{Introduction}

Self-supervised learning aims to learn useful representations of the input data without relying on human annotations. 
Recent advances in self-supervised visual representation learning based on joint embedding methods \citep{misra2020pirl, He2020MomentumCF, chen2020simclr, chen2020simsiam, grill2020byol, zbontar2021barlow, Bardes2021VICRegVR, Chen2020ImprovedBW, Dwibedi2021WithAL, Li2021PrototypicalCL, Misra2020SelfSupervisedLO, HaoChen2021ProvableGF, Assran2021SemiSupervisedLO, Caron2021EmergingPI} show that self-supervised representations have competitive performances compared with supervised ones. These methods generally aim to learn representations invariant to data augmentations by maximizing the agreement between embedding vectors from different distortions of the same images.

As there are trivial solutions where the model maps all input to the same constant vector, known as the collapsing problem, various methods have been proposed to solve this problem that rely on different mechanisms.
Contrastive methods like \citet{chen2020simclr} and \citet{he2016resnet} define `positive' and `negative' sample pairs which are treated differently in the loss function. Non-contrastive methbods like \cite{grill2020byol} and \citet{chen2020simsiam} use stop-gradient, and an extra predictor to prevent collapse without negative pairs; \citet{Caron2018DeepCF, caron2020swav} use an additional clustering step; and \citet{zbontar2021barlow} minimize the redundant information between two branches.

These self-supervised learning methods are successful in preventing complete collapse whereby all representation vectors shrink into a single point. However, it has been observed empirically in non-contrastive learning methods \citep{Hua2021OnFD, Tian2021UnderstandingSL} that while embedding vectors do not completely collapse; they collapse along certain dimensions. This is known as \emph{dimensional collapse} \citep{Hua2021OnFD}, whereby the embedding vectors only span a lower-dimensional subspace.

In contrastive methods that explicitly use positive and negative pairs in the loss function, it seems intuitive to speculate that the repulsive effect of negative examples should prevent this kind of dimensional collapse and make full use of all dimensions. However, contrary to intuition, contrastive learning methods still suffer from dimensional collapse (See Fig.~\ref{fig:singular}). In this work, we theoretically study the dynamics behind this phenomenon. We show there are two different mechanisms that cause collapsing: \textbf{(1)} along the feature direction where the variance caused by the data augmentation is larger than the variance caused by the data distribution, the weight collapses. Moreover, \textbf{(2)} even if the covariance of data augmentation has a smaller magnitude than the data variance along all dimensions, the weight will still collapse due to the interplay of weight matrices at different layers known as implicit regularization. This kind of collapsing happens only in networks where the network has more than one layer.

%singular vectors of the adjacent weight matrices aligned, and small singular values of these weight matrices experience a proportionally smaller gradient and hence grow significantly slower. As a result, these singular values remain a significantly lower value compared to others, indicating collapsed dimensions.

Inspired by our theory, we propose a novel contrastive learning method, called \textit{DirectCLR}, which directly optimizes the encoder (i.e., representation space) without relying on an explicit trainable projector. \textit{DirectCLR} outperforms SimCLR with a linear trainable projector on ImageNet.

We summarize our contributions as follows:
\begin{itemize}
    \item We empirically show that contrastive self-supervised learning suffers from dimensional collapse whereby all the embedding vectors fall into a lower-dimensional subspace instead of the entire available embedding space.
    \item We showed that there are two mechanisms causing the dimensional collapse in contrastive learning: (1) strong augmentation along feature dimensions (2) implicit regularization driving models toward low-rank solutions.
    \item We propose \textit{DirectCLR}, a novel contrastive learning method that directly optimizes the representation space without relying on an explicit trainable projector. \textit{DirectCLR} outperforms SimCLR with a linear trainable projector.
\end{itemize}

\section{Related Works}

\textbf{Self-supervised Learning Methods} Joint embedding methods are a promising approach in self-supervised learning, whose principle is to match the embedding vectors of augmented views of a training instance.
Contrastive methods \citep{chen2020simclr, he2016resnet} directly compare training samples by effectively viewing each sample as its own class, typically based on the InfoNCE contrastive loss \citep{Oord2018RepresentationLW} which encourages representations from positive pairs of examples to be close in the embedding space while representations from negative pairs are pushed away from each other. In practice, contrastive methods are known to require a large number of negative samples. Non-contrastive methods do not directly rely on explicit negative samples. These include clustering-based methods \citep{Caron2018DeepCF, caron2020swav}, redundancy reduction methods \citep{zbontar2021barlow, Bardes2021VICRegVR} and methods using special architecture design \citep{grill2020byol, chen2020simsiam}.

\textbf{Theoretical Understanding of Self-supervised Learning} Although self-supervised learning models have shown success in learning useful representations and have outperformed their supervised counterpart in several downstream transfer learning benchmarks \citep{chen2020simclr}, the underlying dynamics of these methods remains somewhat mysterious and poorly understood.  Several theoretical works have attempted to understand it. \citet{Arora2019ATA, Lee2020PredictingWY, Tosh2021ContrastiveLM} theoretically proved that the learned representations via contrastive learning are useful for downstream tasks. \citet{Tian2021UnderstandingSL} explained why non-contrastive learning methods like BYOL \citep{grill2020byol} and SimSiam \citep{chen2020simsiam} work: the dynamics of the alignment of eigenspaces between the predictor and its input correlation matrix play a key role in preventing complete collapse.

\textbf{Implicit Regularization} It has been theoretically explained that gradient descent will drive adjacent matrices aligned in a linear neural network setting \citep{Ji2019GradientDA}. Under the aligned matrix assumption,  \citet{Gunasekar2018ImplicitRI}  prove that gradient descent can derive minimal nuclear
norm solution. \citet{Arora2019ImplicitRI} extend this concept to the deep linear network case by theoretically and empirically demonstrating that a deep linear network can derive low-rank solutions. In general, over-parametrized neural networks tend to find flatter local minima  \citep{Saxe2019AMT, Neyshabur2019TowardsUT, Soudry2018TheIB, Barrett2021ImplicitGR}.

\begin{figure}[h!]
% \vspace{-0.4in}
    \centering
    \begin{subfigure}[b]{0.45\textwidth}
    \includegraphics[width=\textwidth]{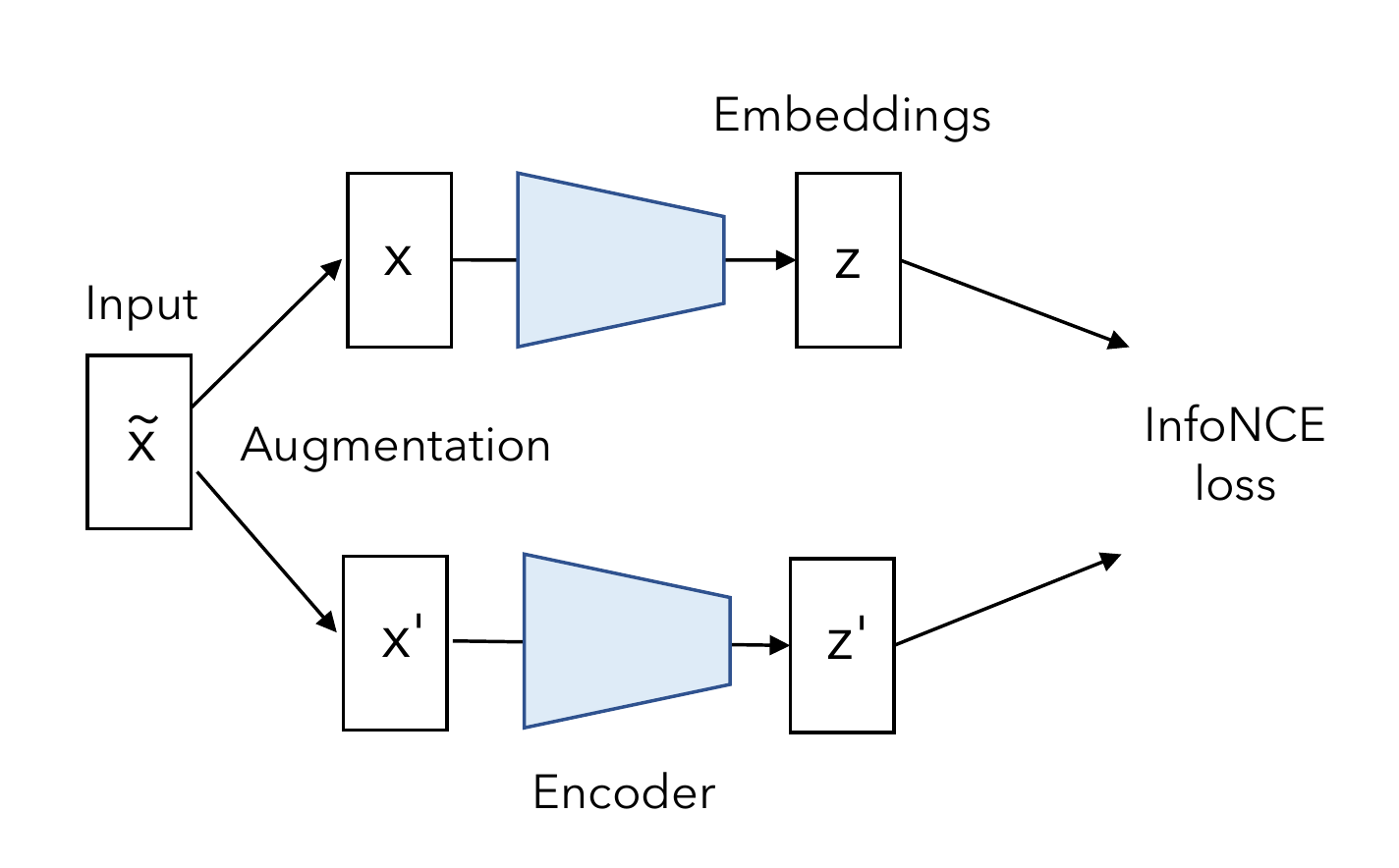}
    \caption{embedding space}
    \label{fig:embedding-diag}
\end{subfigure}
    \begin{subfigure}[b]{0.25\textwidth}
    \includegraphics[width=\textwidth]{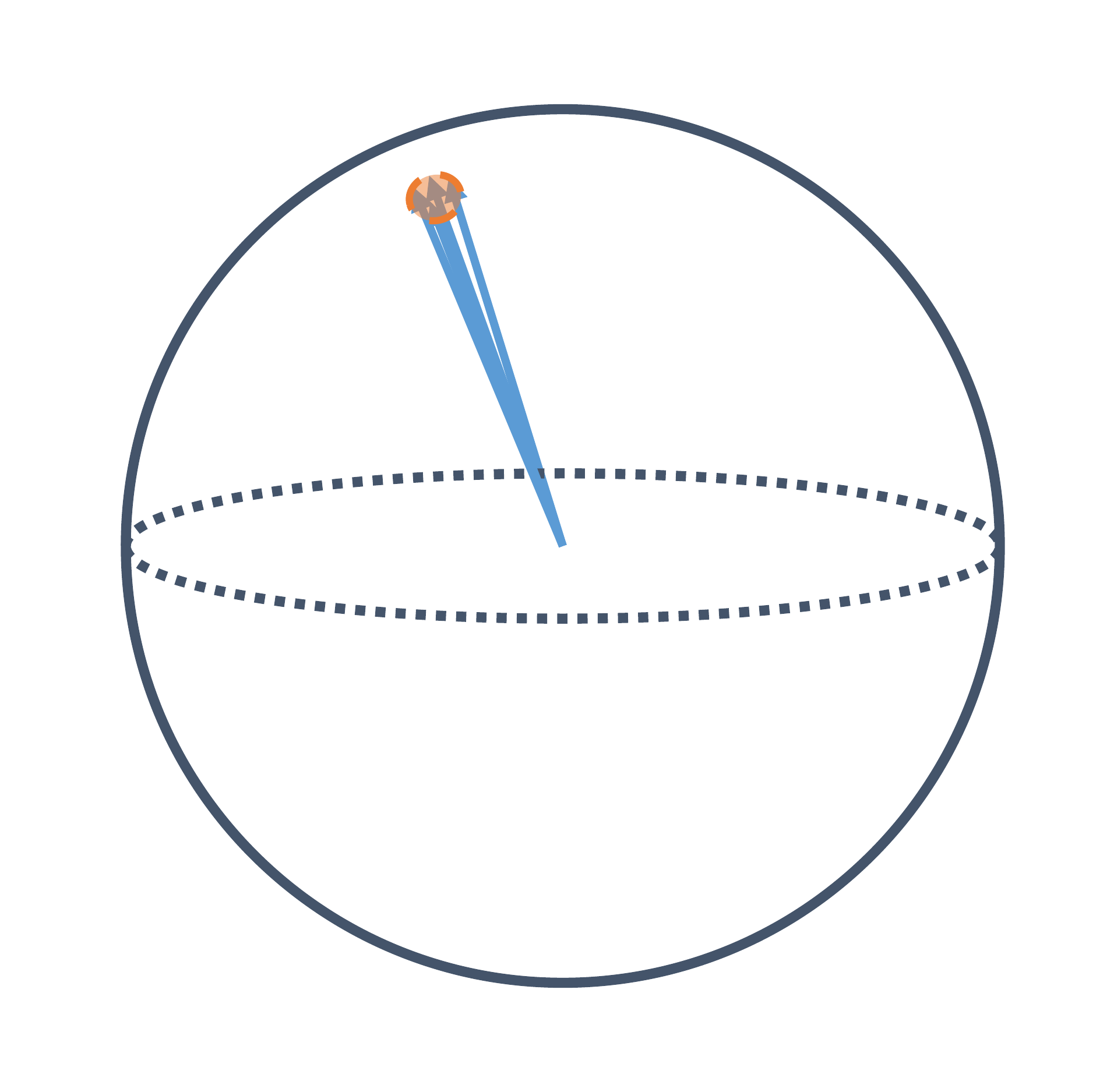}
    \caption{complete collapse}
    \label{fig:complte-collapse}
    \end{subfigure}
    \begin{subfigure}[b]{0.25\textwidth}
    \includegraphics[width=\textwidth]{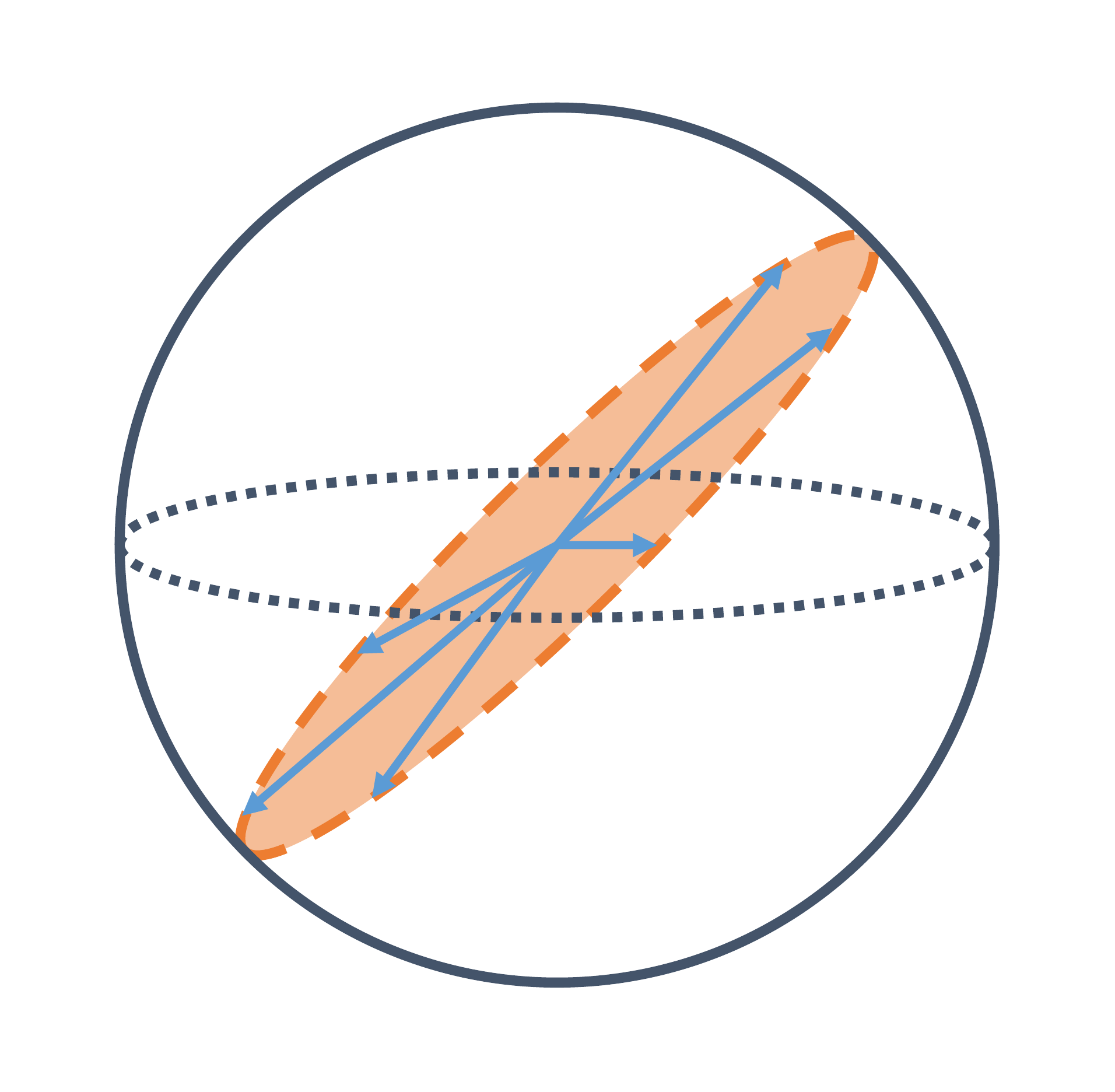}
    \caption{dimensional collapse}
    \label{fig:dimensional-collapse}
    \end{subfigure}
    \caption{\small Illustration of the collapsing problem. For complete collapse, the embedding vectors collapse to same point. For dimensional collapse, the embedding vectors only span a lower dimensional space.
    }
    \label{fig:collapse}
\end{figure}

\section{Dimensional Collapse}
\vspace{-0.1in}
Self-supervised learning methods learn useful representation by minimizing the distances between embedding vectors from augmented images (Figure~\ref{fig:embedding-diag}). On its own, this would result in a collapsed solution where the produced representation becomes constant (Figure~\ref{fig:complte-collapse}). 
Contrastive methods prevent complete collapse via the negative term that pushes embedding vectors of different input images away from each other. In this section, we show that while they prevent complete collapse, contrastive methods still experience a dimensional collapse in which the embedding vectors occupy a lower-dimensional subspace than their dimension (Figure~\ref{fig:dimensional-collapse}).

\begin{wrapfigure}{r}{0.5\textwidth}
\vspace{-0.35in}
  \begin{center}
    \includegraphics[height=1.8in, width=0.5\textwidth]{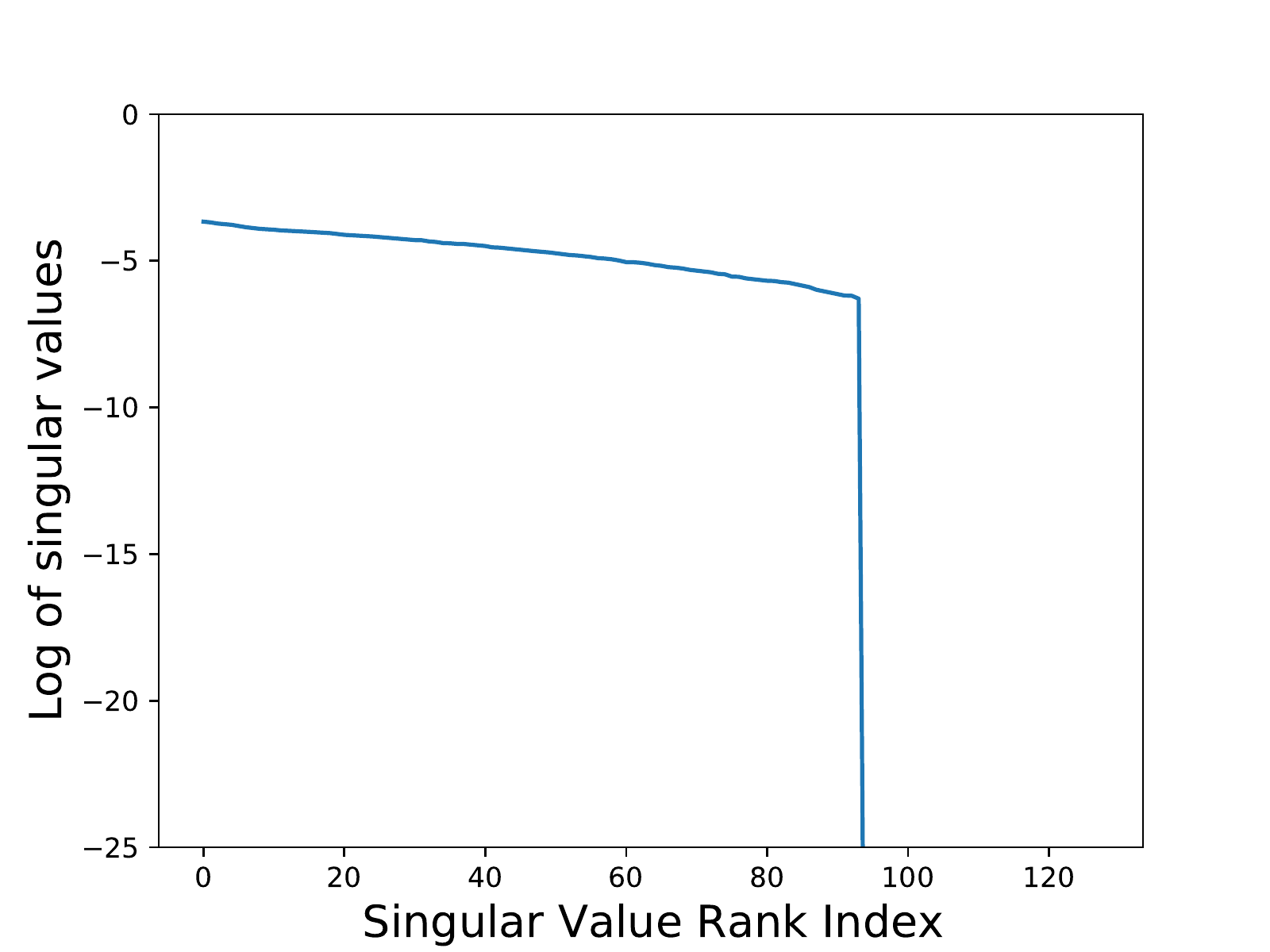}
  \end{center}
  \caption{\small Singular value spectrum of the embedding space. The embedding vectors are computed from a pretrained SimCLR model on the validation set of ImageNet. Each embedding vector has a dimension of 128. The spectrum contains the singular values of the covariance matrix of these embedding vectors in sorted order and logarithmic scale. A number of singular values drop to zero, indicating collapsed dimensions.}
\label{fig:simclr-embedding-spectrum}
\vspace{-0.3in}
\end{wrapfigure}

We train a SimCLR model (\cite{chen2020simclr}) with a two-layer MLP projector. We followed the standard recipe and trained the model on ImageNet for 100 epoch. We evaluate the dimensionality by collecting the embedding vectors on the validation set. Each embedding vector has a size of $d=128$. We compute the \emph{covariance matrix} $C \in \mathbb{R}^{d\times d}$ of the embedding layer (here $\bar{\textbf{z}} := \sum_{i=1}^N \textbf{z}_i / N$ and $N$ is the total number of samples):
\begin{equation}
C = \frac{1}{N}\sum_{i=1}^N (\textbf{z}_i-\bar{\textbf{z}})(\textbf{z}_i-\bar{\textbf{z}})^T \label{eq:cov-embedding-layer} 
\end{equation}
Figure~\ref{fig:simclr-embedding-spectrum} shows singular value decomposition on this matrix ($C = USV^T$, $S = diag(\sigma^k)$). in sorted order and logarithmic scale ($\{\log(\sigma^k)\}$).  
We observe that a number of singular values collapse to zero, thus representing collapsed dimensions.

\section{Dimensional Collapse caused by Strong Augmentation}
\label{sec:large-variance}

\subsection{Linear Model}
In this section, we explain one scenario for contrastive learning to have collapsed embedding dimensions, where the augmentation surpasses the input information. We focus on a simple linear network setting. We denote the input vector as $\textbf{x}$ and the augmentation is an additive noise. The network is a single linear layer with weight matrix is $W$. Hence, the embedding vector is $\textbf{z} = W\textbf{x}$. We focus on a typical contrastive loss, InfoNCE \citep{Oord2018RepresentationLW}:
\begin{equation}
\label{eqn:infoNCE}
    L = -\sum_{i=1}^N\log\frac{\exp(-|\textbf{z}_i -\textbf{z}_i'|^2/2)}{\sum_{j\neq i}\exp(-|\textbf{z}_i -\textbf{z}_j|^2/2) + \exp(-|\textbf{z}_i -\textbf{z}_i'|^2/2)}
\end{equation}
where $\textbf{z}_i$ and $\textbf{z}_i'$ are a pair of embedding vectors from the two branches, $\textbf{z}_j$ indicates the negative samples within the minibatch. When all $\textbf{z}_i$ and $\textbf{z}'_i$ are normalized to be unit vector, the negative distance $-|\textbf{z}_i - \textbf{z}'_i|^2/2$ can be replaced by inner products $\textbf{z}_i^T\textbf{z}_i'$. The model is trained with a basic stochastic gradient descent without momentum or weight decay.

\subsection{Gradient Flow Dynamics}

We study the dynamics via gradient flow, i.e., gradient descent with an infinitesimally small learning rate.

\begin{lemma}
\setcounter{lemma}{1}
\label{lemma:W-gradient}
The weight matrix in a linear contrastive self-supervised learning model evolves by:
\begin{eqnarray}
\label{eqn:gW}
\dot{W} = -G
\end{eqnarray}
where $G = \sum_i (\textbf{g}_{\vz_i}\textbf{x}_i^T + \textbf{g}_{\vz_i'}\textbf{x}_i'^T)$, and $\textbf{g}_{\textbf{z}_i}$ is the gradient on the embedding vector $\textbf{z}_i$ (similarly $\textbf{g}_{\textbf{z}_i'}$). 
\end{lemma}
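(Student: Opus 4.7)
The claim is essentially an application of the chain rule combined with the definition of gradient flow, so the proof plan is short and mostly bookkeeping. My plan is to (i) write out $L$ as a composition through the intermediate variables $\textbf{z}_i = W\textbf{x}_i$ and $\textbf{z}_i' = W\textbf{x}_i'$, (ii) apply the chain rule to obtain $\partial L/\partial W$ as a sum of outer products, and (iii) invoke the gradient-flow definition $\dot W = -\partial L/\partial W$ to get the stated identity.

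Concretely, I would first observe that the loss in \eqref{eqn:infoNCE} depends on $W$ only through the collection $\{\textbf{z}_i, \textbf{z}_i'\}_{i=1}^N$. For any scalar-valued function $f(\textbf{z})$ with $\textbf{z}=W\textbf{x}$, the chain rule gives $\partial f/\partial W = (\partial f/\partial \textbf{z})\,\textbf{x}^T = \textbf{g}_{\textbf{z}}\,\textbf{x}^T$, since $\partial z_a/\partial W_{bc} = \delta_{ab}\,x_c$. Summing over all $i$ and over both branches yields
\begin{equation}
\frac{\partial L}{\partial W} \;=\; \sum_{i=1}^{N}\bigl(\textbf{g}_{\textbf{z}_i}\textbf{x}_i^T + \textbf{g}_{\textbf{z}_i'}\textbf{x}_i'^{\,T}\bigr) \;=\; G.
\end{equation}
By definition of gradient flow (gradient descent with infinitesimal learning rate), $\dot W = -\partial L/\partial W = -G$, which is exactly \eqref{eqn:gW}.

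There is essentially no obstacle here; the one subtlety worth being explicit about is making sure that when $\textbf{z}_j$ appears in the denominator of the InfoNCE term for anchor $i$, its contribution to $\partial L/\partial W$ is correctly aggregated into a single $\textbf{g}_{\textbf{z}_j}$ per sample $j$, rather than being double-counted. This is handled automatically by defining $\textbf{g}_{\textbf{z}_i} := \partial L/\partial \textbf{z}_i$ (the total derivative, summing all InfoNCE terms in which $\textbf{z}_i$ appears as anchor, positive, or negative), so the outer-product sum over $i$ already absorbs every occurrence of each embedding. Given that bookkeeping, the rest is a one-line chain-rule computation and nothing deeper needs to be invoked.
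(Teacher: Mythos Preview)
Your proposal is correct and follows essentially the same route as the paper: both apply the chain rule through $\textbf{z}_i = W\textbf{x}_i$ and $\textbf{z}_i' = W\textbf{x}_i'$, note that $\partial \textbf{z}_i/\partial W$ contributes $\textbf{x}_i$, and then invoke the gradient-flow definition $\dot W = -\partial L/\partial W$ to obtain $\dot W = -G$. Your extra remark about aggregating all occurrences of each $\textbf{z}_j$ into a single $\textbf{g}_{\textbf{z}_j}$ is a helpful clarification, but it does not deviate from the paper's argument.
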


This can be easily proven based on the chain rule. See proof in Appendix~\ref{sec:proof-W-gradient}. For InfoNCE loss defined in Eqn~\ref{eqn:infoNCE}, the gradient of the embedding vector for each branch can be written as
\begin{equation}
    \textbf{g}_{\textbf{z}_i} = \sum_{j\neq i} \alpha_{ij}(\textbf{z}_j-\textbf{z}_i') + \sum_{j\neq i}\alpha_{ji}(\textbf{z}_j-\textbf{z}_i), \quad\quad\quad
    \textbf{g}_{\textbf{z}_i'} = \sum_{j\neq i}\alpha_{ij}(\textbf{z}_i' - \textbf{z}_i)
\end{equation}
where $\{\alpha_{ij}\}$ are the softmax of similarity of between $\vz_i$ and $\{\vz_j\}$, defined by $\alpha_{ij} = \exp(-|\textbf{z}_i -\textbf{z}_j|^2/2) / Z_i$, 
$\alpha_{ii} = \exp(-|\textbf{z}_i -\textbf{z}_i'|^2/2)/Z_i$, and $Z_i = \sum_{j\neq i}\exp(-|\textbf{z}_i -\textbf{z}_j|^2/2) + \exp(-|\textbf{z}_i -\textbf{z}_i'|^2/2)$.
Hence, $\sum_j\alpha_{ij}=1$.
Since $\vz_i = W\textbf{x}_i$, we have \begin{equation}
\label{eqn:G2X}
    G= - WX
\end{equation}
where 
\begin{eqnarray}
\label{eqn:X}
X := - \sum_i\left(\sum_{j\neq i} \alpha_{ij}(\textbf{x}_i'-\textbf{x}_j) + \sum_{j\neq i}\alpha_{ji}(\textbf{x}_i-\textbf{x}_j)\right)\textbf{x}_i^T - \sum_i(1-\alpha_{ii})(\textbf{x}_i'-\textbf{x}_i){\textbf{x}_i'}^T
\end{eqnarray}

\begin{lemma} 
\label{lemma:X}
$X$ is a difference of two PSD matrices: 
\begin{equation}
X = \hat{\Sigma}_0 - \hat{\Sigma}_1
\end{equation}
Here $\hat{\Sigma}_0= \sum_{i,j}\alpha_{ij}(\textbf{x}_i -\textbf{x}_j)(\textbf{x}_i -\textbf{x}_j)^T$ is a weighted data distribution covariance matrix and  
$\hat{\Sigma}_1 =
\sum_{i}(1-\alpha_{ii})(\textbf{x}_i' -\textbf{x}_i)(\textbf{x}_i' -\textbf{x}_i)^T$ is a weighted augmentation distribution covariance matrix.
\end{lemma}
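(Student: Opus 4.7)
The plan is to massage the explicit expression for $X$ from Eqn.~\ref{eqn:X} by three purely algebraic moves into the claimed form $\hat{\Sigma}_0 - \hat{\Sigma}_1$. Once in that form, the PSD statement is immediate: both $\hat{\Sigma}_0$ and $\hat{\Sigma}_1$ are nonnegative linear combinations of the rank-one PSD outer products $(\textbf{x}_i-\textbf{x}_j)(\textbf{x}_i-\textbf{x}_j)^T$ and $(\textbf{x}_i'-\textbf{x}_i)(\textbf{x}_i'-\textbf{x}_i)^T$, using $\alpha_{ij}\ge 0$ and $1-\alpha_{ii}\ge 0$ (the latter since each $\alpha_{ii}$ is a softmax entry lying in $[0,1]$).

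First, I would split the mixed-branch difference in the $\alpha_{ij}$-weighted piece via $\textbf{x}_i'-\textbf{x}_j = (\textbf{x}_i'-\textbf{x}_i) + (\textbf{x}_i-\textbf{x}_j)$. The $(\textbf{x}_i-\textbf{x}_j)$ contribution produces a term of the shape $\sum_{i,j\neq i}\alpha_{ij}(\textbf{x}_i-\textbf{x}_j)\textbf{x}_i^T$ that is exactly one half of what is needed to build $\hat{\Sigma}_0$, while the $(\textbf{x}_i'-\textbf{x}_i)$ contribution collapses, via the softmax normalization $\sum_{j\neq i}\alpha_{ij} = 1-\alpha_{ii}$, into a residual $\sum_i(1-\alpha_{ii})(\textbf{x}_i'-\textbf{x}_i)\textbf{x}_i^T$ that will need to be cancelled later.

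Next, I would symmetrize the $\alpha_{ji}$-weighted piece by the dummy-index swap $i\leftrightarrow j$, converting it (up to sign) into $\sum_{i,j\neq i}\alpha_{ij}(\textbf{x}_i-\textbf{x}_j)\textbf{x}_j^T$. Together with the one-sided contribution from the first step and the termwise identity $(\textbf{x}_i-\textbf{x}_j)\textbf{x}_i^T - (\textbf{x}_i-\textbf{x}_j)\textbf{x}_j^T = (\textbf{x}_i-\textbf{x}_j)(\textbf{x}_i-\textbf{x}_j)^T$, this assembles into exactly $\hat{\Sigma}_0$. Finally, for the last summand of $X$ I would expand ${\textbf{x}_i'}^T = \textbf{x}_i^T + (\textbf{x}_i'-\textbf{x}_i)^T$: the $(\textbf{x}_i'-\textbf{x}_i)^T$ half immediately assembles into $-\hat{\Sigma}_1$, while the residual $\textbf{x}_i^T$ half yields precisely the term needed to cancel the residue from the first step, with the matching coefficient $(1-\alpha_{ii})$ appearing again by the softmax normalization.

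The hard part will be bookkeeping rather than anything conceptual: ensuring that the stray $(1-\alpha_{ii})(\textbf{x}_i'-\textbf{x}_i)\textbf{x}_i^T$ terms produced by the first and third moves actually cancel with matching signs, and that the index swap in the second move yields the sign needed to pair with the first move and form the symmetric outer product $(\textbf{x}_i-\textbf{x}_j)(\textbf{x}_i-\textbf{x}_j)^T$. Any apparent sign mismatch should be traceable back to the conventions in $\textbf{g}_{\textbf{z}_i}$ from Lemma~\ref{lemma:W-gradient} or to the sign convention $G=-WX$ of Eqn.~\ref{eqn:G2X}, rather than to any deeper obstruction.
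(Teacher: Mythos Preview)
Your proposal is correct and follows essentially the same route as the paper's proof: both rely on the softmax identity $\sum_{j\neq i}\alpha_{ij}=1-\alpha_{ii}$, the dummy-index swap $i\leftrightarrow j$, and the outer-product expansion to assemble $\hat\Sigma_0$ and $\hat\Sigma_1$ while cancelling the cross terms. The only difference is cosmetic bookkeeping---the paper builds $\hat\Sigma_0$ entirely out of the $\alpha_{ji}$ piece by writing $\textbf{x}_i^T=(\textbf{x}_i-\textbf{x}_j)^T+\textbf{x}_j^T$ and then has four residual terms to cancel, whereas you split $\textbf{x}_i'-\textbf{x}_j$ first and combine both $\alpha$-pieces, leaving a single residual $(1-\alpha_{ii})(\textbf{x}_i'-\textbf{x}_i)\textbf{x}_i^T$ that cancels directly; your organization is arguably cleaner, and your closing remark about the sign convention is apt, since the paper's own proof silently drops the leading minus sign present in Eqn.~\ref{eqn:X}.
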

See proof in Appendix~\ref{sec:proof-X}. 
Therefore, the amplitude of augmentation determines whether $X$ is a positive definite matrix. Similar to Theorem 3-4 in~\cite{tian2020understanding}, Lemma~\ref{lemma:X} also models the time derivative of weight $W$ as a product of $W$ and a symmetric and/or PSD matrices. However, Lemma~\ref{lemma:X} is much more general: it applies to InfoNCE with multiple negative contrastive terms, remains true when $\alpha_{ij}$ varies with sample pair $(i, j)$, and holds with finite batch size $N$. In contrast, Theorem 4 in~\cite{tian2020understanding} only works for one negative term in InfoNCE, holds only in the population sense (i.e., $N\rightarrow +\infty$), and the formulation has residual terms, if $\alpha_{ij}$ are not constants. 

Next, we look into the dynamics of weight matrix $W$ given property of $X$.
\begin{theorem}
\label{theorem:W-collapse}
With fixed matrix $X$ (defined in Eqn~\ref{eqn:X}) and strong augmentation such that $X$ has negative eigenvalues, the weight matrix $W$ has vanishing singular values. 
\end{theorem}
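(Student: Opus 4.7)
The plan is to combine Lemma~\ref{lemma:W-gradient} with the identity $G = -WX$ from Eqn~\ref{eqn:G2X} to obtain a single closed-form ODE for $W$, namely $\dot W = WX$, and then to solve this ODE explicitly using the assumption that $X$ is held fixed. Direct integration yields $W(t) = W(0)\, e^{Xt}$. By Lemma~\ref{lemma:X}, $X = \hat\Sigma_0 - \hat\Sigma_1$ is a difference of symmetric PSD matrices and is therefore symmetric, so it admits a spectral decomposition $X = V\Lambda V^T$ with $V$ orthogonal and $\Lambda = \mathrm{diag}(\lambda_1,\dots,\lambda_d)$. Consequently $e^{Xt} = V e^{\Lambda t} V^T$ and $W(t) = W(0) V\, e^{\Lambda t}\, V^T$.

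Next, I would transfer the analysis to the singular values of $W(t)$. Since right-multiplication by the orthogonal matrix $V$ preserves singular values, it suffices to study $\widetilde W(t) := W(t) V = B\, e^{\Lambda t}$ with $B := W(0) V$. This matrix is column-decoupled: its $k$-th column equals $e^{\lambda_k t}\, B_{\cdot k}$. Equivalently, if $v_k$ denotes the $k$-th eigenvector of $X$ (the $k$-th column of $V$), then $W(t) v_k = e^{\lambda_k t}\, W(0) v_k$. For every index $k$ with $\lambda_k < 0$, the norm $\|W(t) v_k\| = e^{\lambda_k t}\|W(0) v_k\|$ decays exponentially to $0$.

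The conclusion now follows from the variational characterization of singular values. If $m$ denotes the number of negative eigenvalues of $X$, the corresponding eigenvectors span an $m$-dimensional subspace on which $W(t)$ acts with exponentially shrinking norm. By the Courant--Fischer min--max principle, at least $m$ singular values of $W(t)$ are bounded above by quantities that vanish exponentially in $t$, which is precisely the dimensional collapse claimed. A mild genericity caveat is that one needs $W(0) v_k \neq 0$ for each negative eigendirection; this holds whenever $W(0)$ has no eigenvector of $X$ in its left null space, and in particular for any initialization of full row rank.

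The main obstacle is conceptual rather than computational: the step $W(t) = W(0) e^{Xt}$ crucially relies on $X$ being treated as a constant, whereas in the true nonlinear dynamics $X$ depends on $W$ through the softmax weights $\alpha_{ij}$. The theorem side-steps this by assuming $X$ is fixed, so the exponential-decay argument should be understood as capturing the local dynamics near configurations where $X$ has a persistent negative eigendirection; a fully rigorous treatment that lets $\alpha_{ij}$ co-evolve with $W$ would require controlling how perturbations of $X$ back-react on the decaying modes, but for the statement as written this plan is complete.
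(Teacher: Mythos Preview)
Your proposal is correct and follows essentially the same route as the paper: derive $\dot W = WX$ from Lemma~\ref{lemma:W-gradient} and Eqn~\ref{eqn:G2X}, integrate under the fixed-$X$ assumption to get $W(t)=W(0)e^{Xt}$, diagonalize $X$ by its spectral decomposition, and conclude that the negative eigenvalues of $X$ force $e^{Xt}$ (hence $W(t)$) to become rank-deficient as $t\to\infty$. Your write-up is in fact more careful than the paper's---which simply asserts rank deficiency of $W(\infty)$ from that of $\exp(\Lambda t)$---by making the Courant--Fischer step and the genericity caveat on $W(0)v_k\neq 0$ explicit.
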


See proof in Appendix~\ref{sec:proof-W-collapse}.

\begin{corollary}
[Dimensional Collapse Caused by Strong Augmentation] With strong augmentation, the embedding space covariance matrix becomes low-rank.
\end{corollary}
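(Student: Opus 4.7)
The plan is to read off the corollary directly from Theorem~\ref{theorem:W-collapse} via the linear factorization $\vz_i = W\vx_i$. In the linear model setting of Section~\ref{sec:large-variance}, the centered embeddings satisfy $\vz_i - \bar{\vz} = W(\vx_i - \bar{\vx})$, so the empirical embedding covariance defined in Eqn~\ref{eq:cov-embedding-layer} admits the factorization
\begin{equation}
C \;=\; \frac{1}{N}\sum_{i=1}^N W(\vx_i - \bar{\vx})(\vx_i - \bar{\vx})^T W^T \;=\; W \Sigma_{\vx} W^T,
\end{equation}
where $\Sigma_{\vx}$ is the empirical input covariance. The strategy then reduces to controlling the rank of $W$ and using the elementary inequality $\mathrm{rank}(W \Sigma_{\vx} W^T) \leq \mathrm{rank}(W)$.

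Next, I would invoke the hypothesis of the corollary. ``Strong augmentation'' is the condition that makes $\hat{\Sigma}_1$ dominate $\hat{\Sigma}_0$ along at least one direction, so that the matrix $X = \hat{\Sigma}_0 - \hat{\Sigma}_1$ of Lemma~\ref{lemma:X} possesses at least one negative eigenvalue. Applying Theorem~\ref{theorem:W-collapse} to this $X$, one obtains that (in the gradient-flow limit) at least one singular value of $W$ vanishes. Hence $W$ becomes rank-deficient, and combining with the factorization above yields $\mathrm{rank}(C) < d$, i.e., the embedding covariance is low-rank, proving the corollary.

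The main obstacle is essentially a bookkeeping one rather than a deep step: I must make explicit that ``vanishing singular values'' in the sense of Theorem~\ref{theorem:W-collapse} (an asymptotic statement along the training trajectory) translates into a genuine rank drop of $C$ at the endpoint, which requires noting that a singular value converging to $0$ forces the corresponding column of $W\Sigma_{\vx}^{1/2}$ to vanish and hence contributes no rank to $C$. A secondary subtlety is that the factorization $C = W\Sigma_{\vx}W^T$ can lose rank either because $W$ is rank-deficient or because $\Sigma_{\vx}$ itself is; since we want to attribute collapse specifically to the dynamics of $W$, I would state the result as $\mathrm{rank}(C) \leq \min(\mathrm{rank}(W),\mathrm{rank}(\Sigma_{\vx}))$ and emphasize that Theorem~\ref{theorem:W-collapse} provides the nontrivial bound through $\mathrm{rank}(W)$. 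No further computation is needed beyond these observations.
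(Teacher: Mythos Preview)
Your proposal is correct and follows essentially the same route as the paper: factor the embedding covariance as $C = W\Sigma_{\vx}W^T$ via $\vz_i = W\vx_i$, invoke Theorem~\ref{theorem:W-collapse} to conclude that $W$ has vanishing singular values under strong augmentation, and deduce that $C$ is low-rank. The paper's argument is the one-line version of this; your additional remarks on the asymptotic interpretation of ``vanishing'' and the rank inequality $\mathrm{rank}(C)\le\min(\mathrm{rank}(W),\mathrm{rank}(\Sigma_{\vx}))$ are careful elaborations but do not change the approach.
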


The embedding space is identified by the singular value spectrum of the covariance matrix on the embedding (Eqn.~\ref{eq:cov-embedding-layer}), $C = \sum_i (\textbf{z}_i-\bar{\textbf{z}})(\textbf{z}_i-\bar{\textbf{z}})^T/N = \sum_i W(\textbf{x}_i-\bar{\textbf{x}})(\textbf{x}_i-\bar{\textbf{x}})^TW^T/N$. Since $W$ has vanishing singular values, $C$ is also  low-rank, indicating collapsed dimensions.

% We experimentally verified the theory in Appendix~\ref{sec:simulation-case-1}.

% \section{Experiments for Dimensional Collapse Caused by Strong Augmentations}
% \label{sec:simulation-case-1}

Numerical simulation verifies our theory. We choice input data as isotropic Gaussian with covariance matrix $\sum_{i,j}(\textbf{x}_i -\textbf{x}_j)(\textbf{x}_i -\textbf{x}_j)^T/N = I$. We set the augmentation as additive Gaussian with covariance matrix equal to $\sum_{i}(\textbf{x}_i' -\textbf{x}_i)(\textbf{x}_i' -\textbf{x}_i)^T/N = block\_diagonal(\textbf{0}, k*I)$, where the block has the size of 8x8. We plot the weight matrix singular value spectrum in Figure~\ref{fig:case-1} with various augmentation amplitude $k$. This proves that under linear network setting, strong augmentation leads to dimensional collapse in embedding space.

\begin{wrapfigure}{r}{0.45\textwidth}
\vspace{-0.3in}
  \begin{center}
    \includegraphics[width=0.45\textwidth]{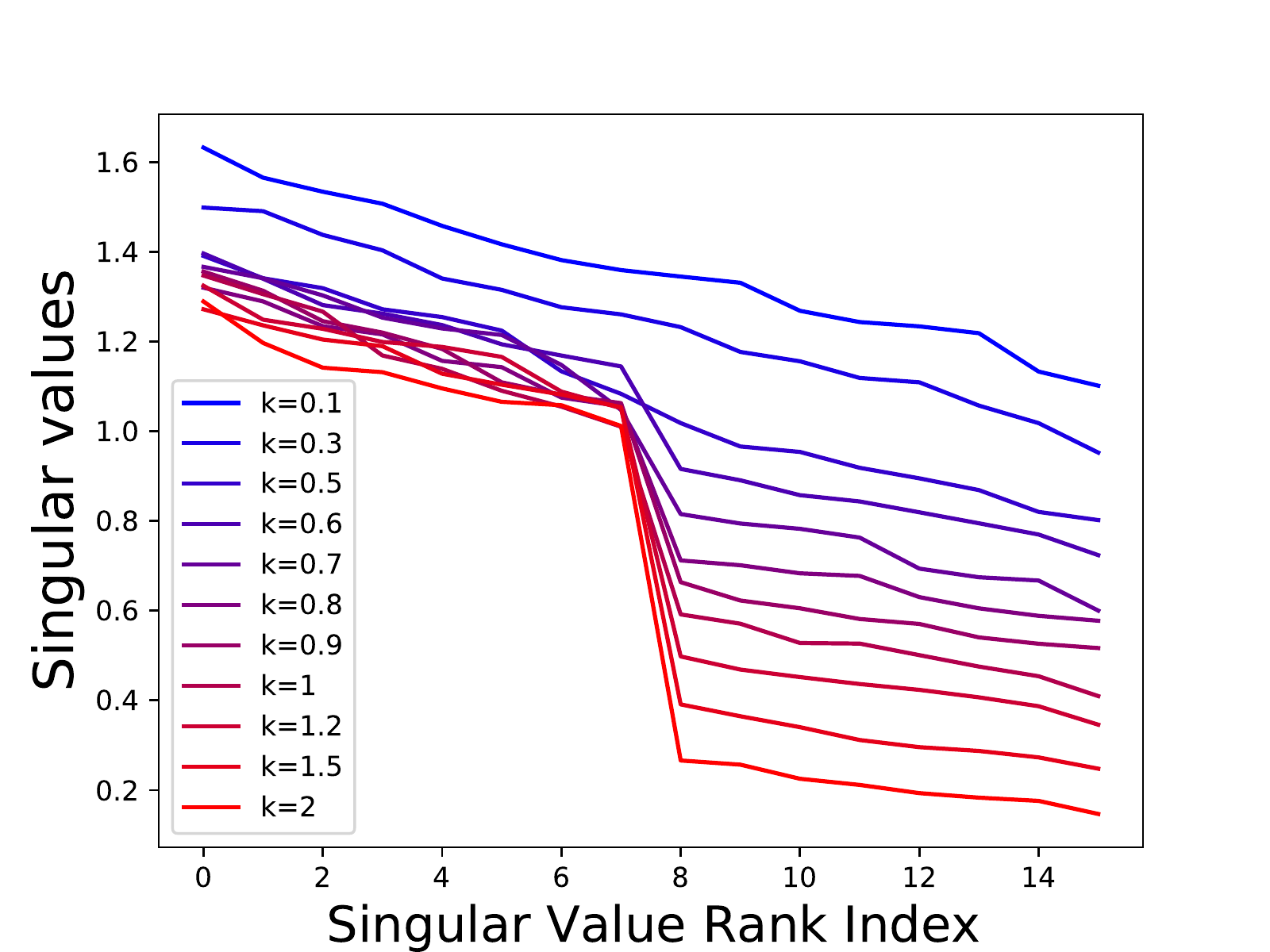}
    \caption{\small Weight matrix singular value spectrum with different augmentation amplitude $k$. The setting is a single layer linear toy model with each weight matrix of the size of 16x16, where the block has the size of 8x8. Strong augmentation results in vanishing singular values in weight matrices.}
    \label{fig:case-1}
    \end{center}
\vspace{-0.5in}
\end{wrapfigure}

Our theory in this section is limited to linear network settings. For more complex nonlinear networks, the collapsing condition will still depend on ``strong augmentation'' but interpreted differently. A strong augmentation will be determined by more complicated properties of the augmentation (higher-order statistics of augmentation, manifold property of augmentation vs. data distribution) conditioned on the capacity of the networks.

\section{Dimensional Collapse caused by Implicit Regularization}
\label{sec:dynamics}

\subsection{Two-layer linear model}
With strong augmentation, a linear model under InfoNCE loss will have dimensional collapse. However, such scenarios rely on the condition that the network has a limited capacity which may not hold for real cases. 
On the other hand, when there is no strong augmentation ($\hat\Sigma_1 \prec \hat\Sigma_0$) and thus $X$ matrix remains PSD, a single linear model won't have dimensional collapsing. However, interestingly, for deep networks, dimensional collapsing still happens in practice. In the following, we will show that it stems from a different nature: implicit regularization, where over-parametrized linear networks tend to find low-rank solutions. 

\begin{wrapfigure}{r}{0.45\textwidth}
\vspace{-0.2in}
  \begin{center}
    \includegraphics[width=0.45\textwidth]{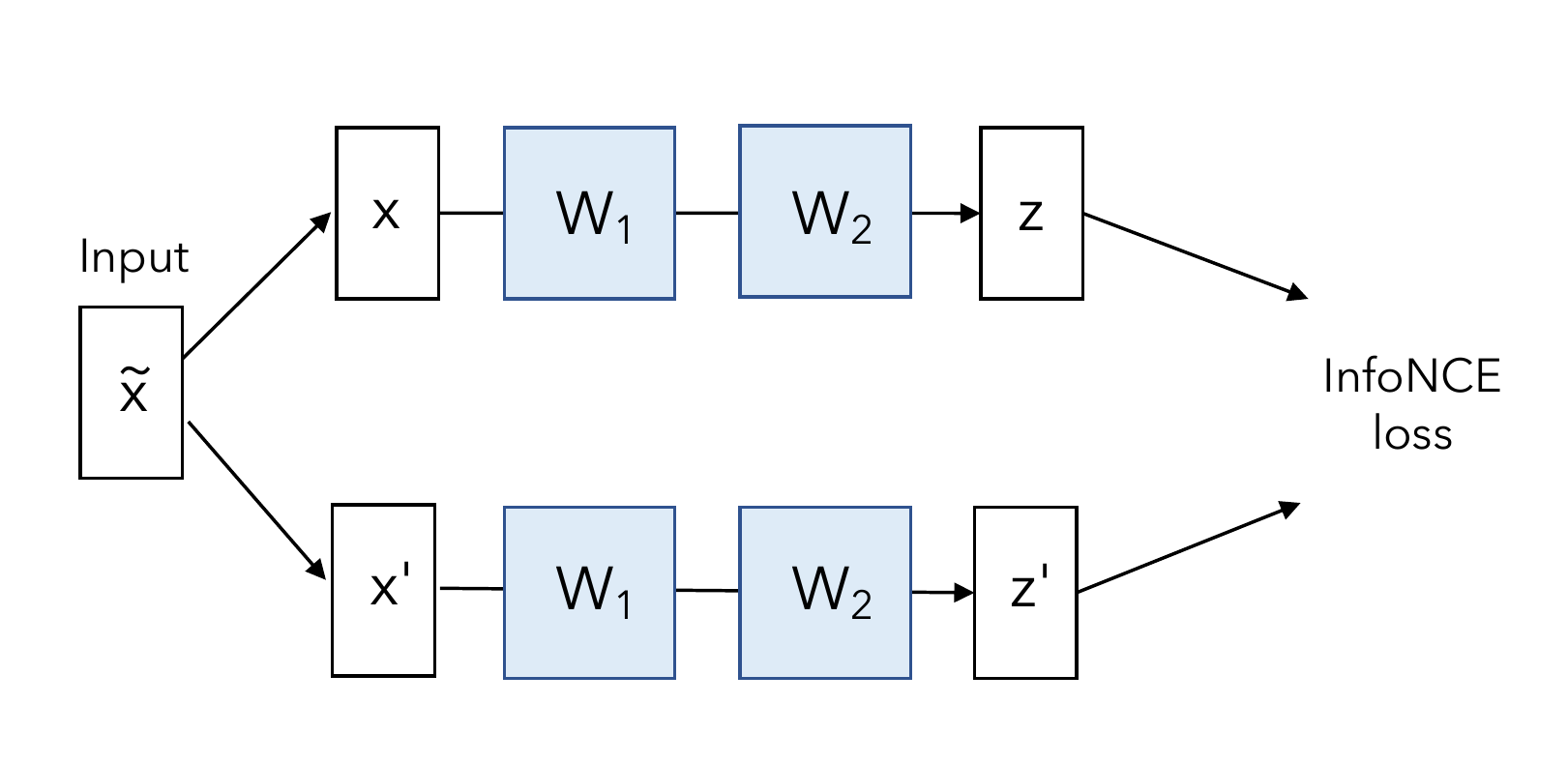}
  \end{center}
  \caption{\small Two-layer Linear Model}
    \label{fig:toy}
\vspace{-0.2in}
\end{wrapfigure}

To understand this counter-intuitive phenomena, we start with the simplest \textbf{over-parametrized} setting by choosing the network as a two-layer linear MLP without bias. The weight matrices of these two layers are denoted by $W_1 \in \mathbb{R}^{d\times d}$ and $W_2 \in \mathbb{R}^{d\times d}$. Similar to the setting in Sec~\ref{sec:large-variance}, the input vector is denoted as $\textbf{x}$ and the augmentation is an additive noise. The embedding vector from each branch is $\textbf{z}=W_2W_1\textbf{x}$, hence $\textbf{z}\in \mathbb{R}^n$. We do not normalize $\textbf{z}$. See Figure~\ref{fig:toy}. We use InfoNCE loss defined in Eqn~\ref{eqn:infoNCE}. The model is trained with a basic stochastic gradient descent without momentum or weight decay.

% We focus on a typical contrastive loss, InfoNCE \citep{Oord2018RepresentationLW}:
% \begin{equation}
% \label{eqn:infoNCE}
%     L = \sum_i\log\frac{\exp(-|\textbf{z}_i -\textbf{z}_i'|^2/2)}{\sum_j\exp(-|\textbf{z}_i -\textbf{z}_j|^2/2)}
% \end{equation}
% where $\textbf{z}_i$ and $\textbf{z}_i'$ are pairs of representation vectors from two branches. $j$ indicates the index of samples within the minibatch. 

% We summarize the sketch of our theory here:
% \begin{enumerate}
%     \item The singular vectors of the weight matrices $W_1$ and $W_2$ corresponding to the same ranking singular values align during training.
%     \item The gradient on each singular value is proportional to itself and the paired singular value with the same ranking from the other matrix. Hence, the small singular values of each weight matrix grow significantly slower. 
%     \item The small singular values of these two matrices evolve to be significantly smaller compared to others. The corresponding dimensions will collapse in the embedding space.
% \end{enumerate}

\subsection{Gradient Flow Dynamics}

Similar to Lemma~\ref{lemma:W-gradient}, we derive the gradient flow on the two weight matrices $W_1$ and $W_2$.

\def\vz{\mathbf{z}}

\begin{lemma}
\label{lemma:W2-gradient}
The weight matrices of the two layer linear contrastive self-supervised learning model evolves by ($G = \sum_i (\textbf{g}_{\vz_i}\textbf{x}_i^T + \textbf{g}_{\vz_i'}\textbf{x}_i'^T)$ is defined in Lemma~\ref{lemma:W-gradient}):
\begin{eqnarray}
\label{eqn:gW12}
\dot{W_1} = -W_2^TG, \quad\quad 
\dot{W_2} = -GW_1^T
\end{eqnarray}
\end{lemma}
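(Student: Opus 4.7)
The plan is to compute $\partial L/\partial W_1$ and $\partial L/\partial W_2$ via the chain rule, using the fact that $\vz = W_2 W_1 \vx$ factors through the intermediate hidden vector $\vh := W_1 \vx$, and then invoke gradient flow $\dot W = -\partial L/\partial W$. The template is already set by Lemma~\ref{lemma:W-gradient}, which handled the single-layer case and introduced the notation $\vg_{\vz_i} = \partial L/\partial \vz_i$ (and similarly $\vg_{\vz_i'}$). Nothing about the particular form of $L$ will matter beyond the existence of these gradients, so the argument will apply verbatim to InfoNCE.

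First I would write, for a single sample pair, $\vz_i = W_2 \vh_i$ with $\vh_i = W_1 \vx_i$ (and analogously for $\vz_i'$, $\vh_i'$, $\vx_i'$). The standard matrix-calculus identity for a linear map $\vz = M \vu$ gives $\partial L/\partial M = \vg_\vz \vu^T$. Applying this with $M = W_2$, $\vu = \vh_i = W_1 \vx_i$ yields the per-sample contribution $\vg_{\vz_i}\vx_i^T W_1^T$ to $\partial L/\partial W_2$; summing the analogous contributions from $\vz_i$ and $\vz_i'$ over $i$ factors $W_1^T$ out on the right and produces exactly $G\,W_1^T$, where $G = \sum_i(\vg_{\vz_i}\vx_i^T + \vg_{\vz_i'}\vx_i'^T)$ is the same quantity as in Lemma~\ref{lemma:W-gradient}.

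Next I would handle $W_1$ by backpropagating the gradient on $\vz_i$ through $W_2$: since $\vz_i = W_2 (W_1 \vx_i)$, the effective gradient on $\vh_i$ is $W_2^T \vg_{\vz_i}$, and then the same rank-one rule gives $\partial L/\partial W_1\Big|_{\text{sample }i} = W_2^T \vg_{\vz_i}\vx_i^T$. Summing both branches and both samples pulls $W_2^T$ out on the left, giving $W_2^T G$. Gradient flow $\dot W_1 = -\partial L/\partial W_1$, $\dot W_2 = -\partial L/\partial W_2$ then produces the two identities in~\eqref{eqn:gW12}.

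I do not expect a genuine obstacle here: the proof is routine chain-rule bookkeeping, and the only thing worth being careful about is the placement of transposes and the order of the factors $G$, $W_1^T$, $W_2^T$ (so that one ends up with $-W_2^T G$ on the left and $-G W_1^T$ on the right, not the other way around). The reason the same $G$ appears in both formulas is precisely that $G$ only involves the output-side gradients and the inputs, both of which are independent of how one splits the linear map $W_2 W_1$; this observation is worth stating explicitly to make clear that the lemma is really a corollary of Lemma~\ref{lemma:W-gradient} under the factorization $W = W_2 W_1$.
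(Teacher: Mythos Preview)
Your proposal is correct and follows essentially the same route as the paper: both arguments compute $\partial L/\partial W_2$ and $\partial L/\partial W_1$ by a direct application of the chain rule through $\vz_i = W_2 W_1 \vx_i$, identify the per-sample contributions as $\vg_{\vz_i}(W_1\vx_i)^T$ and $(W_2^T\vg_{\vz_i})\vx_i^T$ respectively, sum over samples and branches to factor out $W_1^T$ (on the right) or $W_2^T$ (on the left), and then invoke gradient flow. Your explicit introduction of the hidden vector $\vh_i = W_1\vx_i$ and the remark that the same $G$ appears in both formulas are helpful elaborations, but the underlying argument is identical to the paper's.
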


This can be easily proven based on the chain rule. See proof in Appendix~\ref{sec:proof-W2-gradient}. For the two layer case, similar to Eqn~\ref{eqn:G2X}, we have the specific form of $G$:
\begin{equation}
\label{eqn:G2XX}
    G = -W_2 W_1 X
\end{equation}
where $X$ is defined in Eqn~\ref{eqn:X}.
According to Lemma~\ref{lemma:X}, we know that
with small augmentation, $X = \hat{\Sigma}_0 - \hat{\Sigma}_1 \succ 0$ is a positive-definite matrix.

\def\vv{\mathbf{v}}
\def\vu{\mathbf{u}}

\subsection{Weight Alignment}
Since we have two matrices $W_1$ and $W_2$, the first question is how they interact with each other. 
We apply singular value decomposition on both matrices $W_1$ and $W_2$, i.e., 
$W_1 = U_1S_1V_1^T$, $W_2 = U_2S_2V_2^T$ and $S_1=diag([\sigma_{1}^k])$, $S_2=diag([\sigma_{2}^k])$.
The alignment is now governed by the interaction between the adjacent orthonormal matrices $V_2 := [\vv_{2}^k]$ and $U_1 = [\vu_{1}^k]$. This can be characterized by the \emph{alignment matrix} $A = V_2^TU_1$, whose $(k, k')$-entry represents the alignment between the $k$-th right singular vector $\vv_{2}^k$ of $W_2$ and the $k'$-th left singular vector $\vu_{1}^{k'}$ of $W_1$. The following shows that indeed $W_1$ and $W_2$ aligns.
\begin{theorem}[Weight matrices align]
\label{theorem:W-align}
If for all $t$, $W_2(t)W_1(t)\neq 0$, $X(t)$ is positive-definite and $W_1(+\infty)$, $W_2(+\infty)$ have distinctive singular values, then the alignment matrix $A = V_2^T U_1\rightarrow I$.
\end{theorem}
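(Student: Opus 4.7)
The plan is to identify a conserved quantity of the gradient flow and combine it with the distinct-singular-value hypothesis to pin down the limiting orientation of the two SVDs. Substituting $G = -W_2 W_1 X$ (Eqn.~\ref{eqn:G2XX}) into Lemma~\ref{lemma:W2-gradient} yields
\begin{equation}
\dot W_1 = W_2^T W_2 W_1 X, \qquad \dot W_2 = W_2 W_1 X W_1^T.
\end{equation}
Since $X$ is symmetric (Lemma~\ref{lemma:X}), a direct computation gives
\begin{equation}
\frac{d}{dt}(W_2^T W_2) = W_2^T W_2\, W_1 X W_1^T + W_1 X W_1^T\, W_2^T W_2 = \frac{d}{dt}(W_1 W_1^T),
\end{equation}
so the ``imbalance'' $D := W_2^T W_2 - W_1 W_1^T$ is conserved along the flow and equals its initial value $D_0$ for all $t$.

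Writing this in terms of the SVDs $W_1 = U_1 S_1 V_1^T$ and $W_2 = U_2 S_2 V_2^T$ gives
\begin{equation}
V_2 S_2^2 V_2^T = U_1 S_1^2 U_1^T + D_0 \qquad \text{for all } t.
\end{equation}
The strategy is then to argue that as $t\to\infty$ the fixed matrix $D_0$ becomes negligible compared with the two quadratic terms, so that the limiting symmetric matrices $V_2(\infty) S_2(\infty)^2 V_2(\infty)^T$ and $U_1(\infty) S_1(\infty)^2 U_1(\infty)^T$ must coincide. Because the limiting singular values are distinct by hypothesis, both sides then admit a unique spectral decomposition, and matching eigenvalues in decreasing order forces $V_2(\infty) = U_1(\infty)$, i.e., $A = V_2^T U_1 \to I$.

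To make the negligibility of $D_0$ quantitative, I would show that the spectra of $W_1$ and $W_2$ diverge at comparable rates. Positive-definiteness of $X$ together with $W_2 W_1 \neq 0$ gives
\begin{equation}
\frac{d}{dt}\Tr(W_1 W_1^T) = 2\,\Tr\!\left(W_2^T W_2\, W_1 X W_1^T\right) > 0,
\end{equation}
so $\|W_1\|_F^2$ grows monotonically; meanwhile the conservation law pins $\Tr(W_2^T W_2) - \Tr(W_1 W_1^T) = \Tr(D_0)$, so $\|W_1\|_F$ and $\|W_2\|_F$ remain within an additive constant of each other. Rescaling the SVD identity by the (divergent) largest squared singular value of $W_1$ and passing to the limit then kills the $D_0$ contribution.

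The main obstacle is precisely this last step: rigorously quantifying the rate at which $\|D_0\|$ becomes small relative to the diverging spectra, and ruling out pathological scenarios in which two singular values cluster en route and prevent the limiting spectral decomposition from stabilising. The distinct-singular-value hypothesis at $t=\infty$ is exactly what allows the spectral decomposition to stabilise in the limit. An alternative route would be to derive the SVD evolution equations for $(U_i, S_i, V_i)_{i=1,2}$ and extract a direct ODE for the off-diagonal entries of $A = V_2^T U_1$, exhibiting a Lyapunov function driving them to zero; this is conceptually cleaner but algebraically heavier because $\dot U_i$, $\dot S_i$, and $\dot V_i$ couple through inverse singular-value gaps of the form $((\sigma_i^k)^2 - (\sigma_i^l)^2)^{-1}$.
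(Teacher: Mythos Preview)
Your approach is essentially identical to the paper's: derive the conserved quantity $W_2^TW_2 - W_1W_1^T = D_0$, show the Frobenius norms diverge via positive-definiteness of $X$ and $W_2W_1\neq 0$, argue that the constant $D_0$ becomes negligible in the limit so that $V_2 S_2^2 V_2^T = U_1 S_1^2 U_1^T$, and then invoke uniqueness of the spectral decomposition under the distinct-singular-value hypothesis to conclude $V_2^T U_1 \to I$. The step you flag as the ``main obstacle'' (rigorously passing from $\|W_i\|_F\to\infty$ to $D_0$ being negligible for \emph{every} eigendirection) is in fact glossed over in the paper's proof as well, so you have matched both the argument and its level of rigor.
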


See proof in Appendix~\ref{sec:proof-W-align}. 
Here, we also empirically demonstrate that under InfoNCE loss, the absolute value of the alignment matrix $A$ converges to an identity matrix. See Figure~\ref{fig:align}.

The alignment effect has been studied in other scenarios \citep{Ji2019GradientDA, Radhakrishnan2020OnAI}. In the real case, when some of our assumptions are not satisfied, e.g., there are degenerate singular values in weight matrices, we will not observe a perfect alignment. This can be easily understood by the fact that the singular decomposition is no longer unique given degenerate singular values. In our toy experiment, we specifically initialize the weight matrices to have non-degenerate singular values. In real scenario, when weight matrices are randomly initialized, we will only observe the alignment matrix to converge to a block-diagonal matrix, with each block representing a group of degenerate singular values.

\begin{wrapfigure}{r}{0.45\textwidth}
\vspace{-0.2in}
  \begin{center}
    \includegraphics[width=0.2\textwidth]{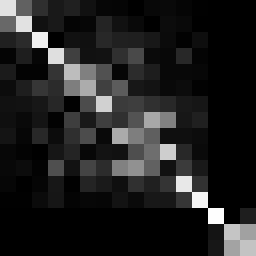}
  \end{center}
  \caption{\small Visualization of the alignment matrix $A=V_2^TU_1$ after training. The setting is a 2-layer linear toy model with each weight matrix of the size of 16x16. The alignment matrix converges to an identity matrix.}
    \label{fig:align}
\vspace{-0.4in}
\end{wrapfigure}

Given the fact that singular vectors corresponding to the same singular value align, we can now study the dynamics of the singular values of each weight matrix $W_1$ and $W_2$. 
\begin{theorem}
\label{theorem:sigma-evolve}
If $W_2$ and $W_1$ are aligned (i.e., $V_2 = U_1^T$), then the singular values of the weight matrices $W_1$ and $W_2$ under InfoNCE loss evolve by:
\begin{eqnarray}
\label{eqn:sigma-evolve}
\dot{\sigma}_{1}^k =\sigma_{1}^k  (\sigma_{2}^k)^2 ({\textbf{v}_{1}^k}^TX\textbf{v}_{1}^k) \\
\dot{\sigma}_{2}^k =\sigma_{2}^k(\sigma_{1}^k)^2 ({\textbf{v}_{1}^k}^TX\textbf{v}_{1}^k) 
\end{eqnarray}
\end{theorem}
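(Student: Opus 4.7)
The plan is to substitute the alignment assumption together with the closed form of $G$ into Lemma~\ref{lemma:W2-gradient}, then peel off the scalar dynamics of each $\sigma_i^k$ via the standard trick of sandwiching $\dot W_i$ between its own left and right singular matrices.

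First I would plug $G = -W_2 W_1 X$ (Eqn~\ref{eqn:G2XX}) into Lemma~\ref{lemma:W2-gradient} to get $\dot W_1 = W_2^T W_2 W_1 X$ and $\dot W_2 = W_2 W_1 X W_1^T$. I would then substitute the SVDs $W_1 = U_1 S_1 V_1^T$ and $W_2 = U_2 S_2 V_2^T$, reading the alignment hypothesis $V_2^T U_1 = I$ from Theorem~\ref{theorem:W-align} as $V_2 = U_1$. The central orthonormal factors then collapse: $W_2^T W_2 = U_1 S_2^2 U_1^T$ and $W_2 W_1 = U_2 S_2 S_1 V_1^T$, yielding the compact forms
\begin{align*}
\dot W_1 &= U_1 \, S_2^2 S_1 \, V_1^T X, \\
\dot W_2 &= U_2 \, S_2 S_1 V_1^T X V_1 S_1 \, U_1^T.
\end{align*}

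Second, I would extract the scalar dynamics. Differentiating $W_1 = U_1 S_1 V_1^T$ and pre-/post-multiplying gives
\[
U_1^T \dot W_1 V_1 \;=\; U_1^T \dot U_1 \, S_1 \;+\; \dot S_1 \;+\; S_1 \, \dot V_1^T V_1,
\]
and since orthonormality of $U_1$ and $V_1$ forces $U_1^T \dot U_1$ and $\dot V_1^T V_1$ to be antisymmetric, their diagonals vanish. Hence the $(k,k)$ entry of $U_1^T \dot W_1 V_1$ is exactly $\dot \sigma_1^k$. Combined with the first step, $U_1^T \dot W_1 V_1 = S_2^2 S_1 V_1^T X V_1$, whose $(k,k)$ entry is $(\sigma_2^k)^2 \, \sigma_1^k \, (\textbf{v}_1^k)^T X \textbf{v}_1^k$, proving the first ODE. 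The analogous identity $U_2^T \dot W_2 V_2 = U_2^T \dot W_2 U_1 = S_2 S_1 V_1^T X V_1 S_1$ then yields $\dot\sigma_2^k = \sigma_2^k (\sigma_1^k)^2 (\textbf{v}_1^k)^T X \textbf{v}_1^k$.

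The main obstacle is the usual one when differentiating along SVD factors: the decomposition is only smooth when the singular values are simple, so one must invoke the distinct-singular-value hypothesis inherited from Theorem~\ref{theorem:W-align} to justify that $U_i(t), S_i(t), V_i(t)$ are differentiable. Once this regularity is granted, the antisymmetry argument recovers $\dot S_i$ without ever computing the off-diagonal rotation components $U_i^T \dot U_i$ or $\dot V_i^T V_i$, so nothing beyond the two matrix simplifications above is required.
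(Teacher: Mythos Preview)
Your proof is correct and follows essentially the same approach as the paper: both combine Lemma~\ref{lemma:W2-gradient}, the closed form $G=-W_2W_1X$, the alignment $V_2^TU_1=I$, and the identity $\dot\sigma^k=(\textbf{u}^k)^T\dot W\,\textbf{v}^k$ (the paper's Lemma~\ref{lemma:arora-1}, which you rederive inline via the antisymmetry of $U^T\dot U$). The only cosmetic difference is ordering: the paper first records the general formula $\dot\sigma_1^k=-\sum_{k'}(\textbf{v}_2^{k'T}\textbf{u}_1^k)\sigma_2^{k'}(\textbf{u}_2^{k'T}G\textbf{v}_1^k)$ (its Lemma~\ref{lemma:singular}) and then imposes alignment and substitutes $G$, whereas you substitute $G$ and impose alignment first, obtaining the compact forms for $\dot W_i$ before reading off the diagonal.
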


See proof in Appendix~\ref{sec:proof-sigma-evolve}. According to Eqn.~\ref{eqn:sigma-evolve}, $(\sigma_1^k)^2 = (\sigma_2^k)^2 + C$. We solve the singular value dynamics analytically: $\dot{\sigma_1^k} = \sigma_1^k ((\sigma_1^k)^2 + C)({\textbf{v}_{1}^k}^TX\textbf{v}_{1}^k)$. This shows that a pair of singular values (singular values with same ranking from the other matrix) have gradients proportional to themselves. Notice that $X$ is a positive definite matrix, the term ${\textbf{v}_{1}^k}^TX\textbf{v}_{1}^k$ is always non-negative. This explains why we observe that the smallest group of singular values grow significantly slower. 
See demonstrative experiment results in Figure~\ref{fig:W1-singular-random} and \ref{fig:W2-singular-random}.

\begin{figure}[h!]
% \vspace{-0.2in}
    \centering
\begin{subfigure}[b]{0.325\textwidth}
    \includegraphics[width=\textwidth]{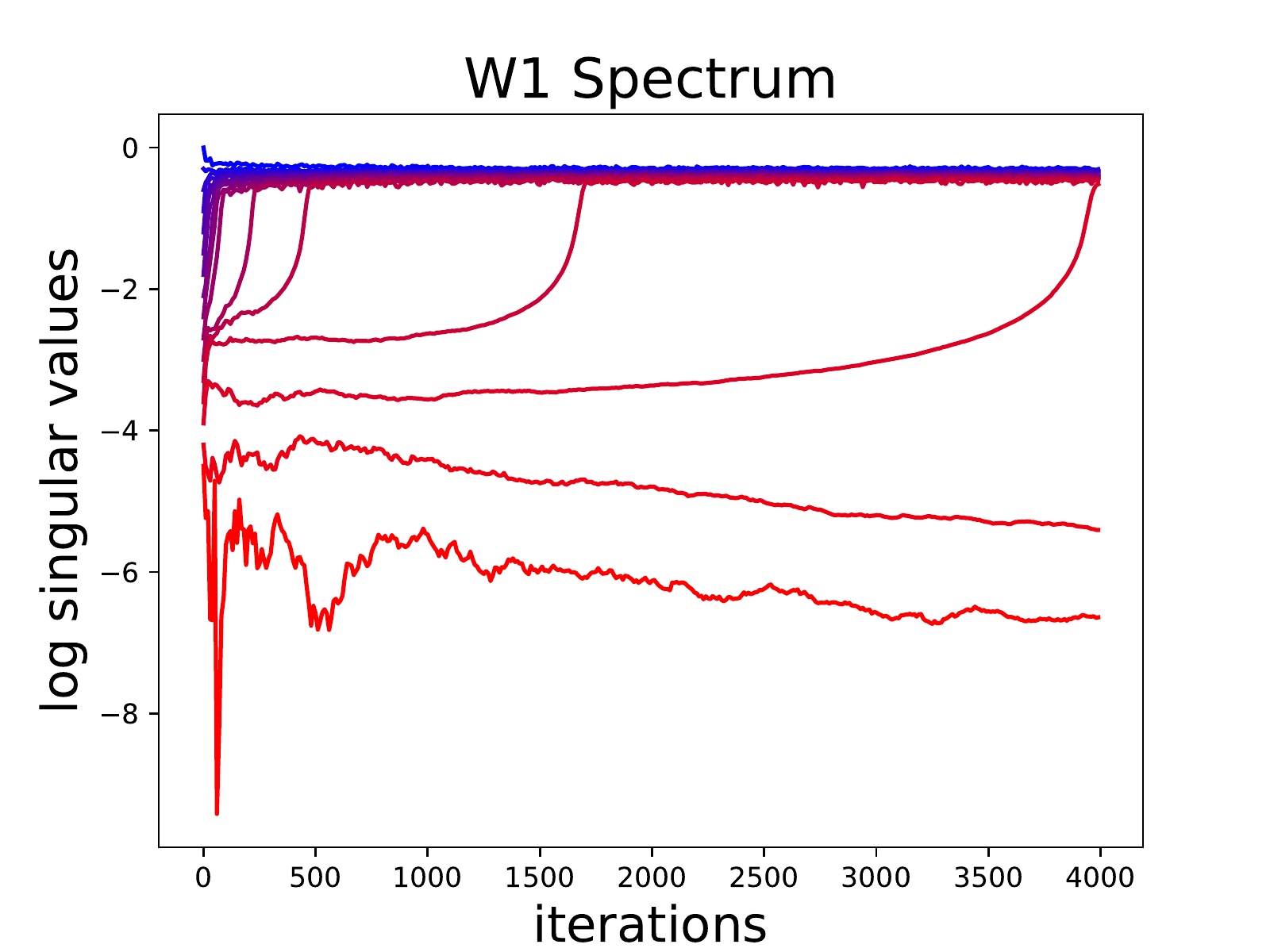}
    \caption{$W_1$}
    \label{fig:W1-singular-random}
 \end{subfigure}
\begin{subfigure}[b]{0.325\textwidth}
    \includegraphics[width=\textwidth]{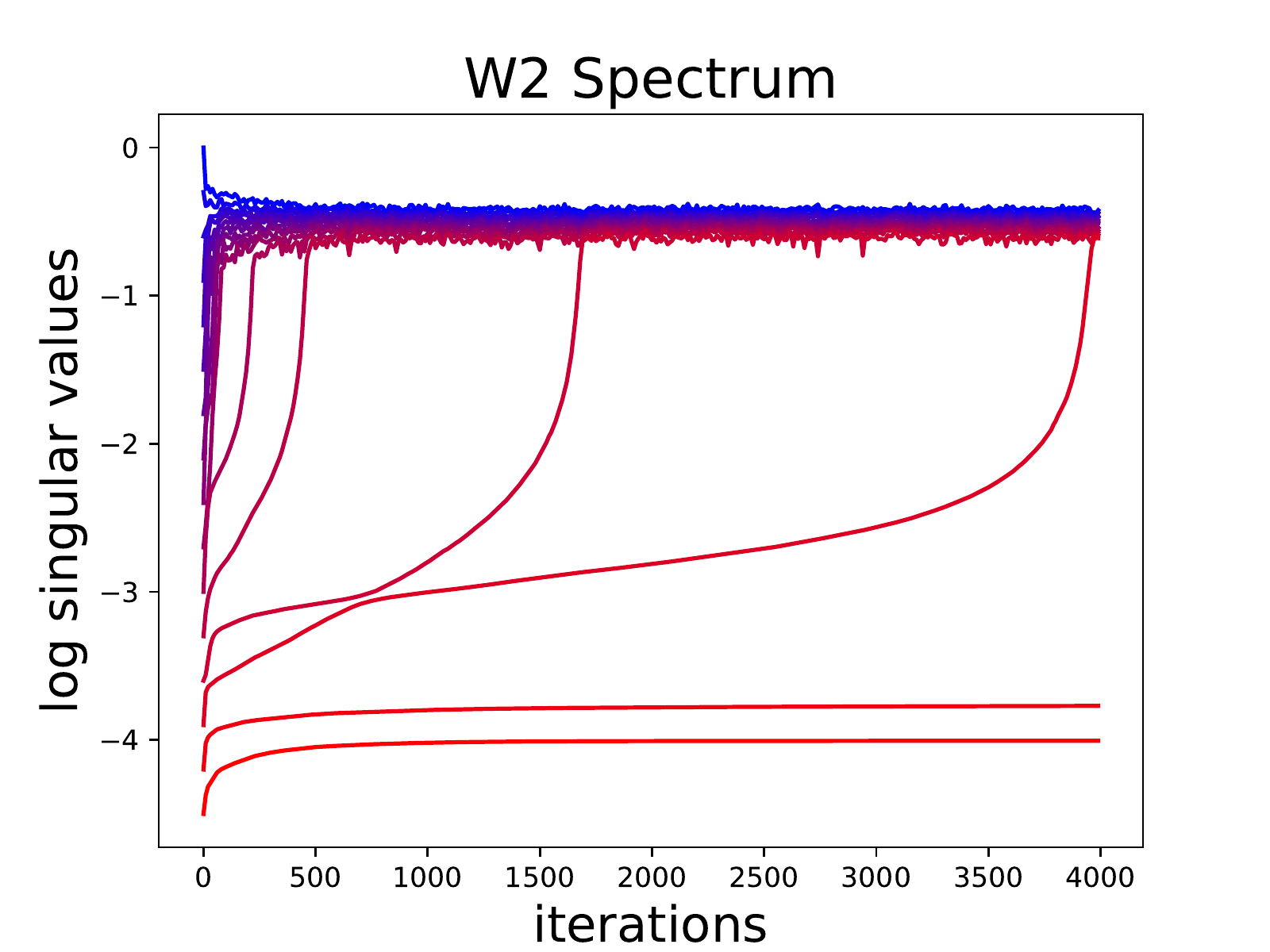}
    \caption{$W_2$}
    \label{fig:W2-singular-random}
 \end{subfigure}
 \begin{subfigure}[b]{0.325\textwidth}
    \includegraphics[width=\textwidth]{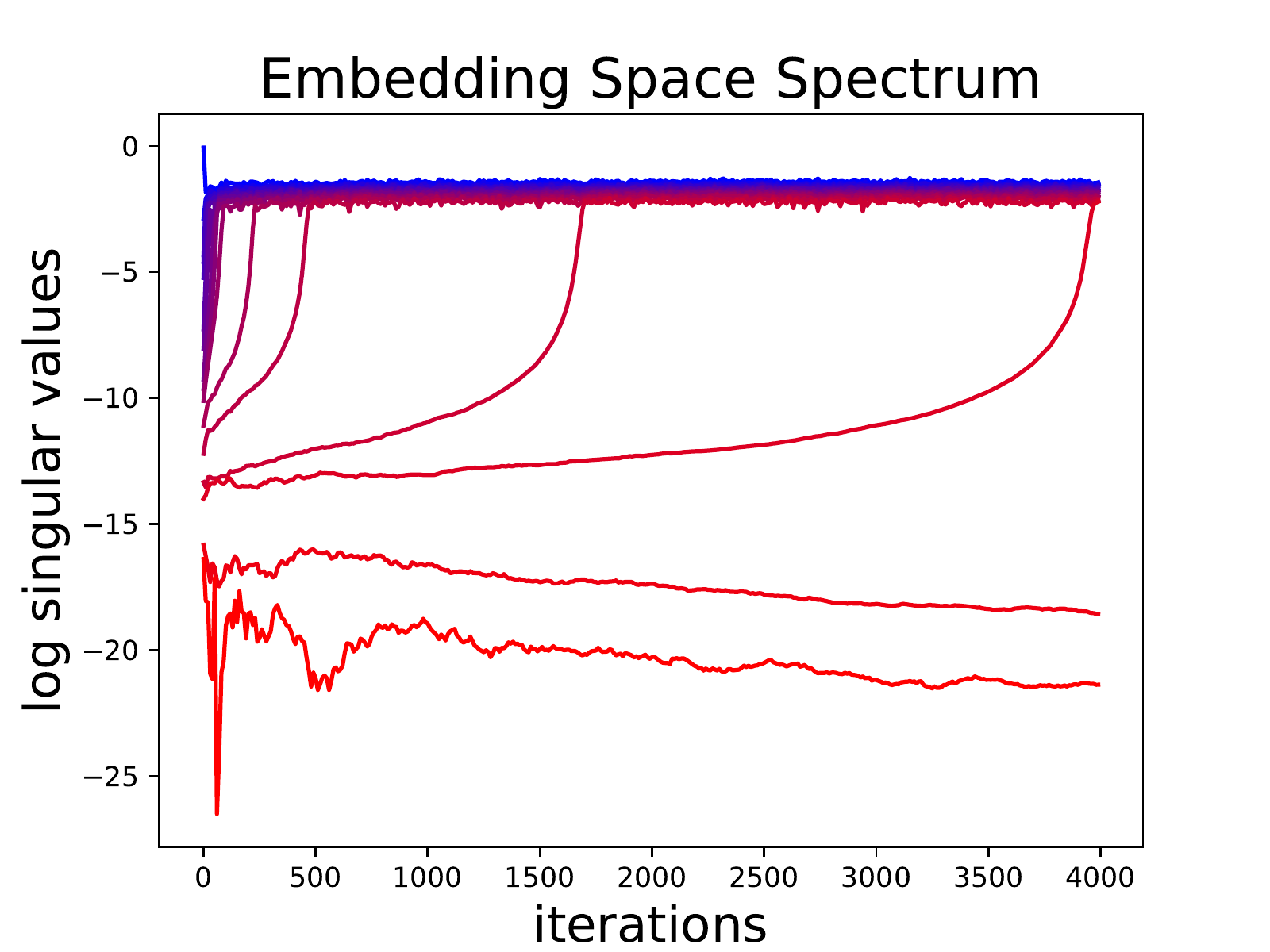}
    \caption{Embedding Space}
    \label{fig:embedding-singular-random}
 \end{subfigure}
\caption{\small Evolution of the singular values of the weight matrices and the embedding space covariance matrix. The setting is a 2-layer linear toy model with each weight matrix of the size of 16x16. The lowest few singular values of each weight matrix remain significantly smaller.}
    
\end{figure}

\begin{corollary}
[Dimensional Collapse Caused by Implicit Regularization] With small augmentation and over-parametrized linear networks, the embedding space covariance matrix becomes low-rank.
\end{corollary}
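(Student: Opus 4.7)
The plan is to combine the weight-alignment result (Theorem~2) with the singular-value dynamics (Theorem~3) to show that the product $W_2 W_1$ acquires a large gap in its singular-value spectrum, which directly transfers to the embedding covariance $C = (W_2 W_1)\,\Sigma_x\,(W_2 W_1)^T$, where $\Sigma_x$ is the empirical data covariance. The first step is to read off the singular values of the product: under the alignment $V_2^T U_1 \to I$, we have $W_2 W_1 = U_2 S_2 V_2^T U_1 S_1 V_1^T \to U_2 (S_2 S_1) V_1^T$, so the $k$-th singular value of $W_2 W_1$ equals $\sigma_1^k\sigma_2^k$. Hence the $k$-th eigenvalue of $C$ scales like $(\sigma_1^k\sigma_2^k)^2$ times the corresponding diagonal entry of $V_1^T \Sigma_x V_1$; if some product $\sigma_1^k\sigma_2^k$ is negligibly small, the associated dimension of the embedding covariance collapses.

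The second step is to argue that a nontrivial fraction of the pairs $(\sigma_1^k,\sigma_2^k)$ remain vanishingly small throughout training. From Theorem~3, subtracting the two ODEs gives the conservation law $(\sigma_1^k)^2 - (\sigma_2^k)^2 = C_k$, a constant determined by initialization. With the standard small-scale random initialization used in over-parametrized networks, $C_k$ is itself small, so along each mode $\sigma_1^k\approx\sigma_2^k=:\sigma_k$. Substituting this back, the scalar ODE becomes
\begin{equation}
\dot{\sigma}_k \;=\; \sigma_k^3 \, ({\mathbf{v}_1^k}^{T} X \,\mathbf{v}_1^k),
\end{equation}
whose solution is $\sigma_k(t) = \sigma_k(0)\bigl(1 - 2\,\sigma_k(0)^2\,({\mathbf{v}_1^k}^{T} X \mathbf{v}_1^k)\,t\bigr)^{-1/2}$. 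The crucial observation is that both the rate constant $({\mathbf{v}_1^k}^{T} X \mathbf{v}_1^k)$ and the prefactor $\sigma_k(0)^2$ enter multiplicatively: modes for which either the initial singular value is small or the direction $\mathbf{v}_1^k$ lies in a low-energy eigendirection of $X$ take an escape time $t^*_k \sim 1/\bigl(\sigma_k(0)^2\,{\mathbf{v}_1^k}^{T} X \mathbf{v}_1^k\bigr)$ to become macroscopic. For any finite training horizon $T$, there is therefore a set of modes with $\sigma_k(T)$ orders of magnitude below the top modes, so $\sigma_1^k\sigma_2^k \approx \sigma_k^2$ is effectively zero on those directions.

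The third step is to translate this back to the embedding covariance. Since $C = (W_2W_1)\Sigma_x (W_2W_1)^T = U_2 (S_2 S_1) V_1^T \Sigma_x V_1 (S_2 S_1) U_2^T$, the nonzero eigenvalues of $C$ are bounded above by $(\sigma_1^k \sigma_2^k)^2 \,\lambda_{\max}(\Sigma_x)$, so every collapsed mode of the product weight matrix produces a collapsed singular value of $C$. Combined with the preceding step, a nontrivial set of eigenvalues of $C$ remains vanishingly small for the entire training window, which is precisely the statement of dimensional collapse.

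The main obstacle I expect is making the phrase \emph{``effectively low-rank''} fully rigorous: strictly speaking, as long as $X\succ 0$ and initialization is nonzero, every $\sigma_k(t)$ is eventually positive, so the rank of $C$ never literally drops. The cleanest way around this is to state the result as a quantitative bound at finite time $T$: the number of singular values of $C$ that fall below any threshold $\epsilon$ grows with the gap in the escape times $t^*_k$, which in turn is controlled by the spread of initial scales and the spectrum of $X$ along the left singular directions of $W_1$. A secondary subtlety is that Theorem~2 assumes distinct singular values throughout training; in the degenerate case, the alignment matrix is only block-diagonal, but the block structure still preserves the factorization of singular values of the product, so the argument goes through mode-by-mode within each block.
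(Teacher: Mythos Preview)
Your proposal is correct and follows essentially the same approach as the paper: use the alignment result and the singular-value ODE from Theorems~2 and~3 to argue that $W_2W_1$ becomes effectively low-rank, then observe that $C = (W_2W_1)\,\Sigma_x\,(W_2W_1)^T$ inherits this. The paper's own justification of the corollary is a single sentence invoking this factorization, together with the qualitative remark that $\dot\sigma_1^k = \sigma_1^k((\sigma_1^k)^2+C)({\mathbf v_1^k}^T X\,\mathbf v_1^k)$ makes small singular values grow much more slowly; your explicit solution, escape-time analysis, and frank discussion of ``effectively low-rank'' at finite $T$ go beyond the paper in rigor, but the underlying logic is identical.
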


The embedding space is identified by the singular value spectrum of the covariance matrix on the embedding vectors, $C = \sum (\textbf{z}-\bar{\textbf{z}})(\textbf{z}-\bar{\textbf{z}})^T/N = \sum W_2W_1(\textbf{x}-\bar{\textbf{x}})(\textbf{x}-\bar{\textbf{x}})^TW_1^TW_2^T/N$. As $W_2W_1$ evolves to be low-rank, $C$ is  low-rank, indicating collapsed dimensions.
See Figure~\ref{fig:embedding-singular-random} for experimental verification.

Our theory can also be extended to multilayer networks and nonlinear setting. Please see Appendix~\ref{sec:multilayer}

\section{DirectCLR} 

\subsection{Motivation}
We now leverage our theoretical finding to design novel algorithms. Here we are targeting the projector component in contrastive learning.  

% In typical contrastive learning, the output vectors from the backbone encoder used for downstream tasks, known as representations, are not directly optimized by contrastive loss. Instead, a common practice is to add an MLP projector on top of the encoder and apply the loss function to the projector's output. The output vectors from the final projector are called embeddings. See Figure~\ref{fig:repre-vs-embed}.

% In our two layer example, we  
Empirically, adding a projector substantially improves the quality of the learned representation and downstream performance~\citep{chen2020simclr}. Checking the spectrum of the representation layer also reveals a difference with/without a projector. To see this, we train two SimCLR models with and without a projector.
The representation space spectrum are shown in Figure~\ref{fig:representation-spectrum}. The dimensional collapse in representation space happens when the model is trained without a projector. Thus, the projector prevents the collapse in the representation space.

\begin{figure}[h!]
% \vspace{-0.3in}
    \centering
\begin{subfigure}[b]{0.58\textwidth}
    \includegraphics[width=\textwidth]{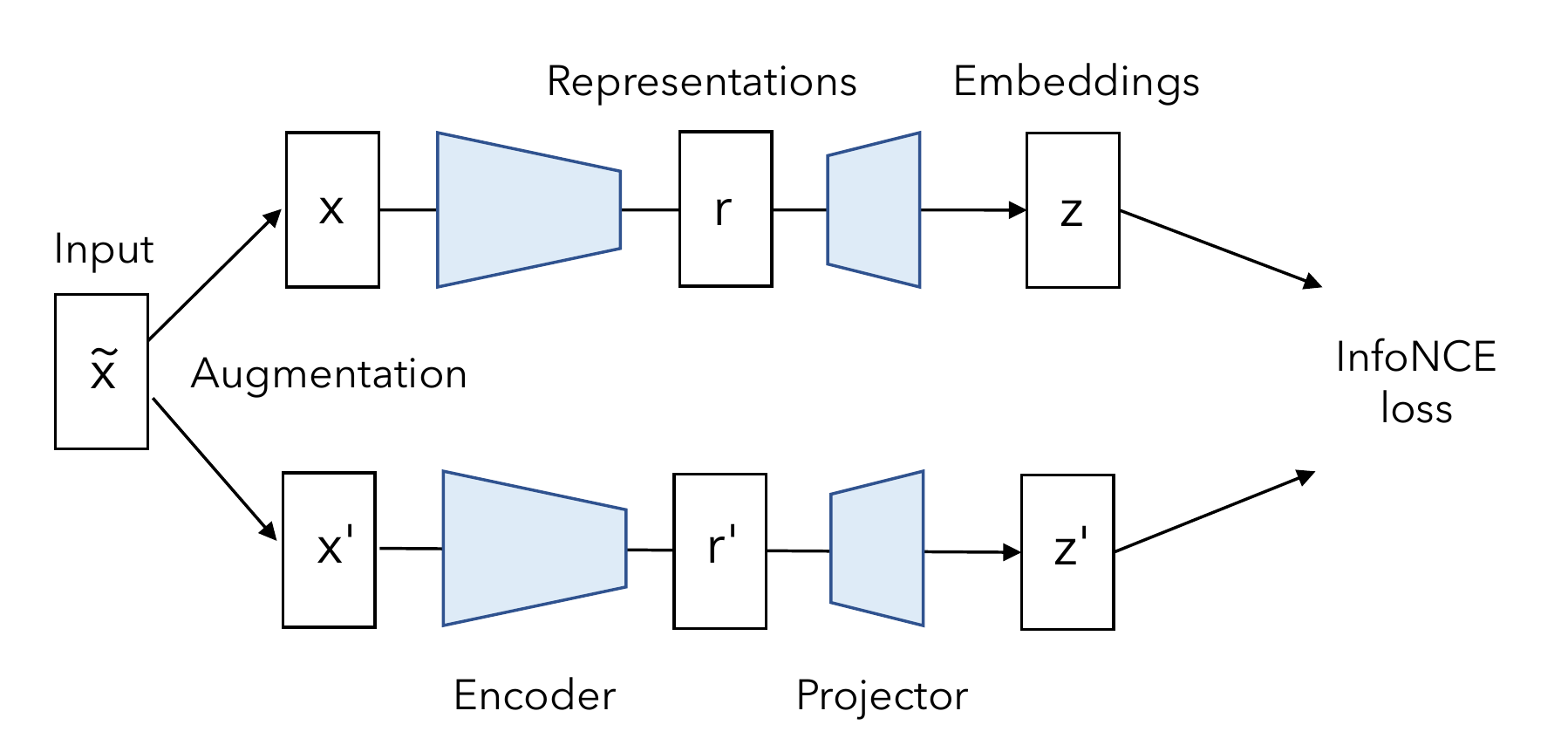}
    \caption{representation and embedding}
    \label{fig:repre-vs-embed}
\end{subfigure}
\begin{subfigure}[b]{0.4\textwidth}
    \includegraphics[width=\textwidth]{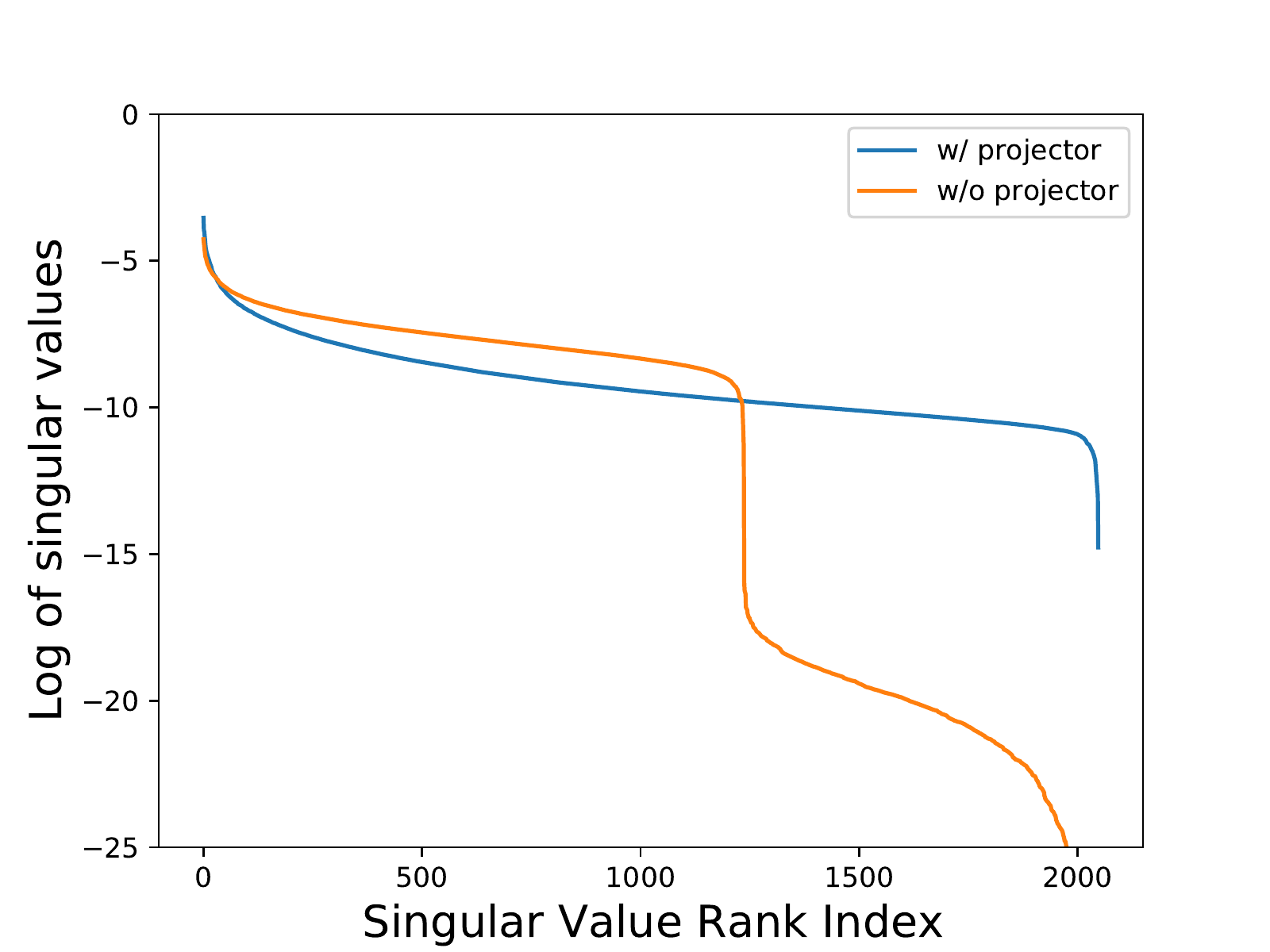}
    \caption{Representation space spectrum}
    \label{fig:representation-spectrum}
\end{subfigure}
    \caption{\small (a) Definition of representation and the embedding space; (b) Singular value spectrums of the representation space of pretrained contrastive learning models (pretrained with or without a projector). The representation vectors are the output from the ResNet50 encoder and directly used for downstream tasks. Each representation vector has a dimension of 2048. Without a projector, SimCLR suffers from dimensional collapse in the representation space.}
    \label{fig:singular}
\end{figure}

% Our theory does not directly explain why there's no dimensional collapse in the representation space when there is a projector for contrastive learning models. Indeed, if the whole network is a simple MLP, we would observe progressive dimensional collapse across layers, as the rank of a hidden layer decreases when it moves close to the output (similar to Figure~\ref{fig:morelayer-spectrum}). However, in the real case, we observe almost no collapsed dimensions in the representation space, see Figure~\ref{fig:representation-spectrum}. We suspect that the specific building block in the backbone encoder, such as residual connection, prevents the collapse. To theoretically understand why a projector prevents collapse in the representation space remains our future work.

The projector in contrastive learning is essential to prevent dimensional collapse in the representation space. We claim the following propositions regarding a linear projector in contrastive learning models.
\begin{proposition}
\label{proposition:diagonal}
    A linear projector weight matrix only needs to be \textbf{diagonal}.
\end{proposition}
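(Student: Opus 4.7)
The plan is to combine two invariances of the contrastive objective. The InfoNCE loss (Eqn.~\ref{eqn:infoNCE}) depends on the embeddings only through pairwise Euclidean distances $|\textbf{z}_i-\textbf{z}_j|^2$, which are preserved under any orthogonal transformation applied uniformly on the left; and the preceding encoder can absorb an orthogonal transformation on its output without leaving its hypothesis class, since its last layer is a free linear readout. Together these let me reduce any linear projector to a diagonal one without changing the achievable loss.

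Concretely I would proceed as follows. First, take the singular value decomposition $W_p = U S V^T$. Second, observe that replacing $W_p$ by $U^T W_p = S V^T$ leaves every pairwise embedding distance, and hence the whole loss, invariant, so the left rotation $U$ is irrelevant. Third, reparametrize the encoder by $\tilde{f}(\textbf{x}) := V^T f(\textbf{x})$, which for a backbone whose final layer is an unconstrained linear readout just right-multiplies that layer's weights by $V$ and stays in the same hypothesis class. The embedding then becomes $\textbf{z} = S \tilde{f}(\textbf{x})$, exhibiting a diagonal effective projector $S$. Consequently, for every (encoder, general linear projector) pair there is an equivalent (reparametrized encoder, diagonal projector) pair achieving the same loss, which is what the proposition asserts.

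The main obstacle is justifying the absorption of $V^T$ into the encoder: one needs the encoder hypothesis class to be closed under left-composition with a fixed orthogonal matrix on its output. This is automatic for the ResNet-style backbones used in this paper, whose final readout layer is an unconstrained linear map, so pre-multiplying its weights by $V$ keeps the model inside its parameter space. The dynamical results of Theorems~\ref{theorem:W-align}--\ref{theorem:sigma-evolve} are not strictly required for the argument, but they give complementary intuition: under gradient flow the projector spontaneously aligns with its neighboring layer, producing exactly the diagonal structure that the proposition says is all that is needed.
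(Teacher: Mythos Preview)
Your argument is correct, but it differs from the paper's own justification. The paper does not give a formal proof of Proposition~\ref{proposition:diagonal}; it motivates the claim via the \emph{training dynamics}: by Theorem~\ref{theorem:W-align}, under gradient flow the right singular frame $V_2$ of the projector and the left singular frame $U_1$ of the encoder's last layer align so that $V_2^T U_1 \to I$, making $V_2$ effectively redundant. The proposition is then backed up empirically by the ablations in Sec.~\ref{sec:ablation} (orthogonal projector $\approx$ no projector; trainable diagonal $\approx$ trainable full; fixed low-rank diagonal $\approx$ fixed low-rank). In contrast, you give a static \emph{invariance/reparametrization} argument: the left factor $U$ disappears because InfoNCE depends only on pairwise distances, and the right factor $V^T$ is absorbed into the encoder's last linear map.

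The two routes establish slightly different things. Your argument shows \emph{expressive equivalence}: every (encoder, linear projector) configuration has a loss-equivalent (reparametrized encoder, diagonal projector) configuration, so restricting to diagonal projectors costs nothing in principle. The paper's argument is about what gradient descent \emph{actually does}: even if one trains with a full projector, the dynamics drive the adjacent layers into alignment, so only the singular values end up mattering. Your approach is more elementary (no gradient-flow analysis needed) and also explicitly dispatches the left orthogonal factor $U_2$, which the paper's motivating paragraph does not mention. The paper's route, on the other hand, explains why a trainable full projector and a trainable diagonal projector reach similar accuracy in practice, which pure reparametrization does not by itself guarantee. Your caveat about the encoder's last layer being an unconstrained linear readout is the right thing to flag; the paper makes essentially the same idealization when it identifies $W_1$ with the encoder's final layer.
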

\begin{proposition}
\label{proposition:low-rank}
    A linear projector weight matrix only needs to be \textbf{low-rank}.
\end{proposition}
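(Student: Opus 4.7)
The plan is to combine the weight-alignment result of Theorem~\ref{theorem:W-align} with the singular-value dynamics of Theorem~\ref{theorem:sigma-evolve}, applied to the pair consisting of the last encoder layer $W_1$ and the linear projector $W_2$. By Proposition~\ref{proposition:diagonal}, I may assume without loss of generality that $W_2$ is diagonal in the basis of right singular vectors of $W_1$, so that the two matrices share a common singular-vector frame. In this frame the evolution decouples across coordinates: each dimension $k$ is described by a scalar pair $(\sigma_1^k,\sigma_2^k)$ governed by Eqn.~\ref{eqn:sigma-evolve}.

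Next I would extract a conservation law. Multiplying the two equations of Theorem~\ref{theorem:sigma-evolve} by $\sigma_1^k$ and $\sigma_2^k$ respectively yields $\tfrac{d}{dt}(\sigma_1^k)^2 = \tfrac{d}{dt}(\sigma_2^k)^2 = 2(\sigma_1^k)^2(\sigma_2^k)^2\,\beta_k(t)$, where $\beta_k(t) := {\textbf{v}_1^k}^T X \textbf{v}_1^k$, so $(\sigma_1^k)^2-(\sigma_2^k)^2=C_k$ is constant along the trajectory. In the small-augmentation regime, $X=\hat\Sigma_0-\hat\Sigma_1\succ 0$ by Lemma~\ref{lemma:X}, so $\beta_k(t)>0$, and the second equation becomes $\dot\sigma_2^k = \sigma_2^k\bigl((\sigma_2^k)^2+C_k\bigr)\beta_k(t)$. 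This ODE is multiplicative in $\sigma_2^k$, which forces an exponential separation between coordinates: a coordinate with $\sigma_2^k(0)$ close to zero grows at a rate proportional to itself, while dominant coordinates are driven by a cubic term. Hence, in the $t\to\infty$ limit only a small subset of singular values of $W_2$ is appreciable; i.e., $W_2$ converges to an effectively low-rank operator.

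To finish, I would argue that discarding the exponentially suppressed directions does not change the contrastive objective. The InfoNCE loss in Eqn.~\ref{eqn:infoNCE} depends on $W_2$ only through pairwise distances between embeddings, and directions with vanishing singular values contribute negligibly to those distances, so replacing $W_2$ by its truncation to the top-$r$ singular directions leaves the loss unchanged in the limit. Combined with Proposition~\ref{proposition:diagonal}, this yields Proposition~\ref{proposition:low-rank}: the nontrivial part of the projector is a low-rank diagonal matrix in the aligned basis. The main obstacle will be quantifying ``only needs to be'' in a two-sided way, namely showing that an \emph{a priori} low-rank parametrization can attain the same optimum as an unconstrained one, rather than merely observing that unconstrained gradient flow ends up low-rank. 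A secondary difficulty is lifting the statement from the two-layer linear toy model to a realistic nonlinear encoder, where $X$ itself depends on the encoder's nonlinearity; a clean treatment likely requires linearizing around the trained encoder, or stating Proposition~\ref{proposition:low-rank} as a linear-model consequence to be corroborated by the empirical spectra of Figure~\ref{fig:singular}.
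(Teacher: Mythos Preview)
Your proposal follows essentially the same approach as the paper. The paper does not give a formal proof of Proposition~\ref{proposition:low-rank}; it offers it as a claim \emph{motivated} by the implicit-regularization analysis of Section~\ref{sec:dynamics} (precisely the alignment of Theorem~\ref{theorem:W-align} and the multiplicative singular-value dynamics of Theorem~\ref{theorem:sigma-evolve} that you invoke), and then \emph{verifies} it through the ablation studies of Section~\ref{sec:ablation}. Your write-up is a more explicit unpacking of that same theoretical motivation: the conservation law $(\sigma_1^k)^2-(\sigma_2^k)^2=C_k$ and the resulting separation of singular values are exactly what the paper uses informally in the paragraph following Theorem~\ref{theorem:sigma-evolve}. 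The two obstacles you flag at the end---making ``only needs to be'' two-sided, and lifting from the linear toy to a nonlinear encoder---are not deficiencies of your argument relative to the paper's; the paper does not resolve them either, and instead closes the gap empirically (Table~\ref{tbl:ablation}) and with the explicit disclaimer that the theory does not fully cover the nonlinear case.
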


Based on our theory on implicit regularization dynamics, we expect to see adjacent layers $W_1 (=U_1S_1V_1^T)$ and $W_2 (=U_2S_2V_2^T)$ to be aligned such that the overall dynamics is only governed by their singular values $S_1$ and $S_2$. And the orthogonal matrices $V_2^T$ and $U_1$  are redundant as they will evolve to $V_2^TU_1=I$, given $S_1$ and $S_2$.

Now, let’s consider the linear projector SimCLR model and only focus on the channel dimension. $W_1$ is the last layer in the encoder, and $W_2$ is the projector weight matrix. Our propositions claim that for this projector matrix $W_2$, the orthogonal component $V_2$ can be omitted. Because the previous layer $W_1$
 is fully trainable, its orthogonal component ($U_1$) will always evolve to satisfy $V_2^TU_1=I$. Therefore, the final behavior of the projector is only determined by the singular values ($S_2$
) of the projector weight matrix. This motivates Proposition~\ref{proposition:diagonal}: the orthogonal component of the weight matrix doesn’t matter. So we can set the projector matrix as a diagonal matrix.

Also, according to our theory, the weight matrix will always converge to the low-rank. The singular value diagonal matrix 
naturally becomes low-rank, so why not just set it low-rank directly? This is the motivation of Proposition~\ref{proposition:low-rank}.

These propositions are verified via ablation studies in Sec~\ref{sec:ablation}. Given these two propositions, we propose {\it DirectCLR}, which is effectively using a low-rank diagonal projector.

\subsection{Main Idea}

\begin{wrapfigure}{r}{0.45\textwidth}
\vspace{-0.3in}
  \begin{center}
    \includegraphics[width=0.45\textwidth]{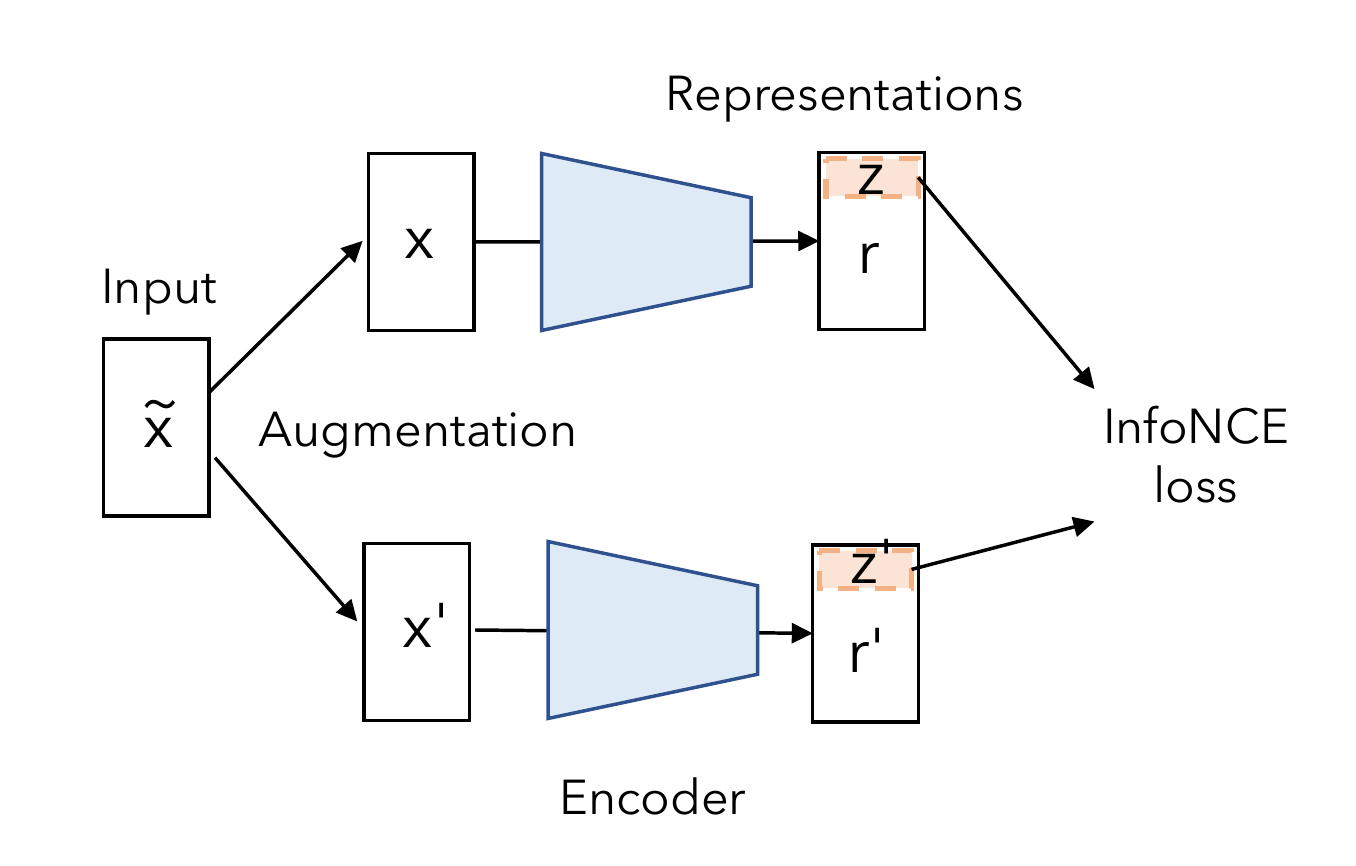}
  \end{center}
  \caption{\small  \textit{DirectCLR}: no explicit trainable projector, simply apply InfoNCE loss on the a fixed sub-vector of the representations}
    \label{fig:directclr}
\vspace{-0.2in}
\end{wrapfigure}

We propose to remove the projector in contrastive learning by directly sending a sub-vector of the representation vector to the loss function. We call our method $\textit{DirectCLR}$. In contrast to all recent state-of-the-art self-supervised learning methods, our method directly optimizes the representation space.
See Figure~\ref{fig:directclr}, \textit{DirectCLR} picks a subvector of the representation $\textbf{z} = \textbf{r}[0:d_0]$,
where $d_0$ is a hyperparameter. Then, it applies a standard InfoNCE loss on this normalized subvector $\hat{\textbf{z}} = \textbf{z} / |\textbf{z}|$,
$L = \sum_i\log\frac{\exp(\hat{\textbf{z}}_i \cdot\hat{\textbf{z}}_i')}{\sum_j\exp(\hat{\textbf{z}}_i \cdot\hat{\textbf{z}}_j)} $.

We train \textit{DirectCLR} with a standard recipe of SimCLR for 100 epochs on ImageNet. The backbone encoder is a ResNet50. More implementation details can be found in the Appendix~\ref{sec:detail}.
\textit{DirectCLR} demonstrates better performance compared to SimCLR with a trainable linear projector on ImageNet. The linear probe accuracies for each model are listed in Table~\ref{tbl:imagenet}.

\begin{table}[h!]
    \centering
    \begin{tabular}{llc}
    \toprule
    Loss function & Projector & Accuracy \\
    \midrule
    SimCLR & 2-layer nonlinear projector & 66.5 \\
    SimCLR &  1-layer linear projector & 61.1 \\
    SimCLR & no projector & 51.5 \\ \textit{DirectCLR} & no projector & 62.7 \\
    \bottomrule
    \end{tabular}
    \caption{\small Linear probe accuracy on ImageNet. Each model is trained on ImageNet for 100 epochs with standard training recipe. The backbone encoder is a ResNet50. \textit{DirectCLR} outperforms SimCLR with 1-layer linear projector.}
    \label{tbl:imagenet}
\end{table}

We visualize the learnt representation space spectrum in Figure~\ref{fig:repre-spectrum}.
\textit{DirectCLR} prevents dimensional collapse in the representation space similar to the functionality of a trainable projector in SimCLR.

\begin{figure}[ht]
\begin{minipage}[b]{0.45\linewidth}\centering
    \centering
    \includegraphics[width=1.0\textwidth]{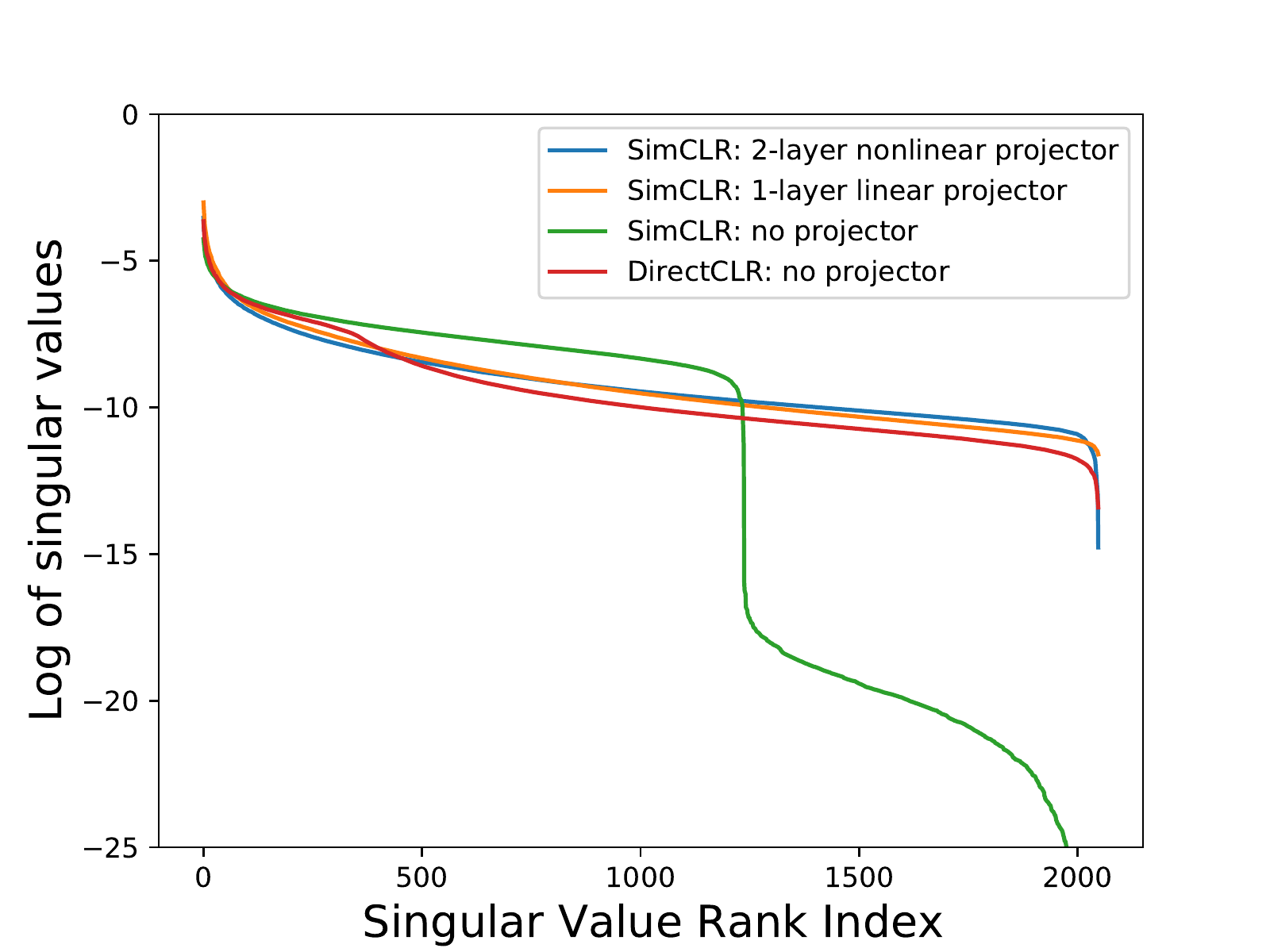}
    \caption{\small Representation space spectrum of \textit{DirectCLR} compared to SimCLR (a) with a 2-layer nonlinear projector (b) with a 1-layer linear projector (c) without projector. The spectrums are computed based on the output from the backbone, using ImgaeNet validation set. Similar to SimCLR with projectors, $\textit{DirectCLR}$ is able to prevent dimensional collapse in the representation space.}
    \label{fig:repre-spectrum}
\end{minipage}
\hspace{0.5cm}
\begin{minipage}[b]{0.5\linewidth}
\centering
\centering
\includegraphics[width=\textwidth]{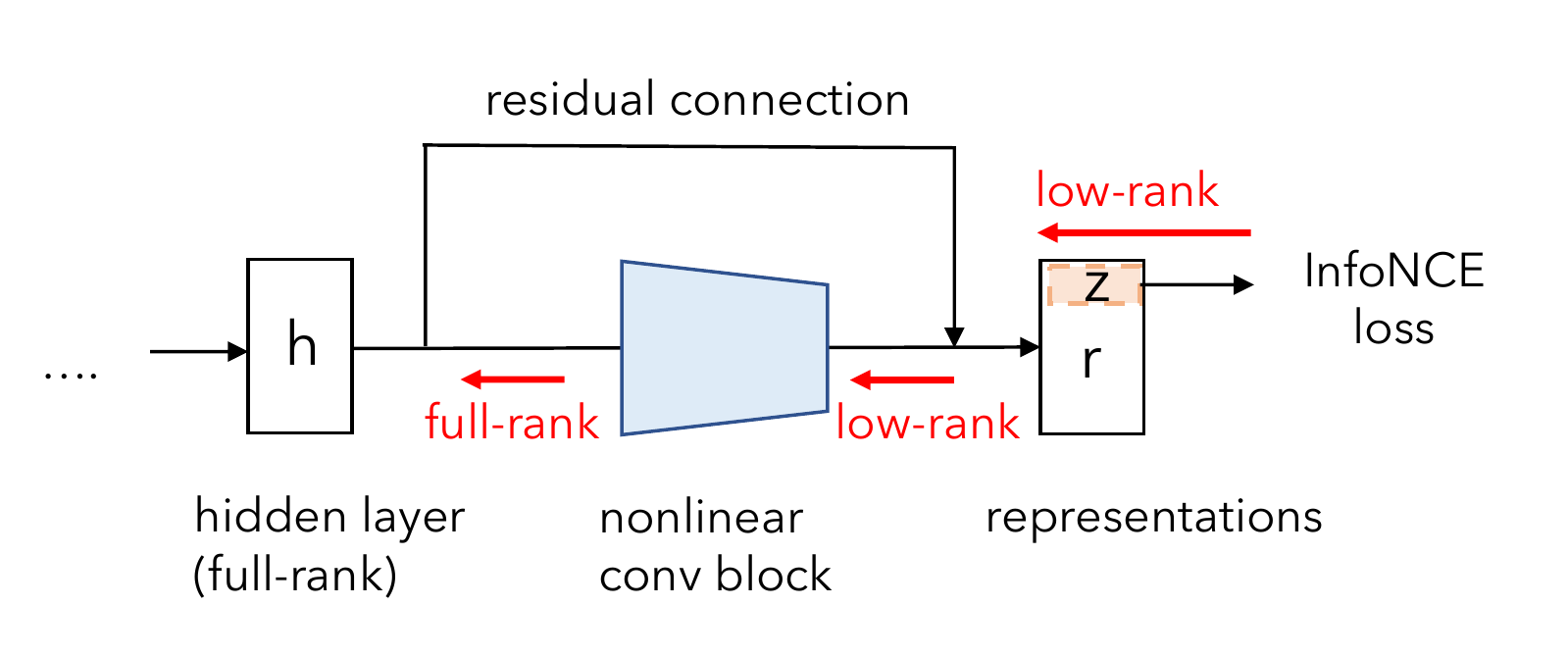}
  \caption{\small Why is the whole representation vector $\textbf{r}$ meaningful in \textit{DirectCLR} while only part of it receives gradient? It takes advantage of the residual connection in the backbone.
  Thus, the gradient passing through the representation vector is low-rank where only the first $d_0$ channel dimensions are non-zero. When the gradient enters the ResNet backbone and passes through the last nonlinear conv block, it becomes full rank. 
  Therefore, this hidden layer $\textbf{h}$ receives gradients on all channels. During forward pass, $\textbf{h}$ is directly fed to the representation vectors via the residual connection. Therefore, the entire representation vector $\textbf{r}$ is meaningful.
  }
    \label{fig:directCLR-dynamics}
\end{minipage}
\end{figure}

One may suspect that the contrastive loss in \textit{DirectCLR} does not apply a gradient on the rest part of the representation vector $\textbf{r}[d_0: ]$, then why these dimensions would contain useful information?

Here, we show that the entire representation vector $\textbf{r}$ contains useful information. See Figure~\ref{fig:directCLR-dynamics}. First, the gradient backpropagating through the representation vector is low-rank, where only the first $d_0$ channel dimensions are non-zero. When the gradient enters the ResNet backbone and passes through the last nonlinear conv block, it becomes full rank. Therefore, this hidden layer $\textbf{h}$ receives gradients on all channels. Note that $\textbf{h}$ and $\textbf{r}$ have a same channel dimension of 2048. Next, we consider the forward pass. This hidden layer $\textbf{h}$ is directly fed to the representation vectors via the residual connection. As a result, the rest part of the representation vector $\textbf{r}[d_0: ]$ is not trivial. 
In addition, we run an ablation study in Sec~\ref{sec:additional-ablation} to test the linear probe accuracy based only on the ``directly'' optimized vector. This verifies that the whole representation vector is meaningful. 

\textit{Disclaimer: DirectCLR is able to replace the linear projector and verify the two propositions on understanding the dynamics of a linear projector. But our theory is not able to fully explain why a nonlinear projector is able to prevent dimensional collapse. DirectCLR also still relies on the mechanism of a nonlinear projector to prevent dimensional collapse, which is effectively performed by the last block of the backbone, as explained above. }

\subsection{Ablation Study}
\label{sec:ablation}

\begin{table}[h!]
    \centering
    \begin{tabular}{lccc}
    \toprule
    Projector & diagonal & low-rank &Top-1 Accuracy \\
    \midrule
    no projector & & & 51.5 \\
    orthogonal projector & & & 52.2 \\
    trainable projector & & & 61.1 \\
    trainable diagonal projector & \checkmark & & 60.2 \\
    fixed low-rank projector & & \checkmark & 62.3 \\
    fixed low-rank diagonal projector & \checkmark & \checkmark & 62.7 \\
    \bottomrule
    \end{tabular}
    \caption{\small Ablation study: top-1 accuracies on ImageNet by SimCLR model with different projector settings.}
    \label{tbl:ablation}
\end{table}

To further verify our hypothesis, we have perform ablation studies.

Proposition~\ref{proposition:diagonal} matches the fact that: (a) an orthogonal constrained projector performs the same as the non-projector setting; (b) fixed low-rank projector performs the same as a fixed diagonal projector; (c) trainable linear projector performs the same as a trainable diagonal projector. 

Proposition~\ref{proposition:low-rank} matches the observation that a low-rank projector has the highest accuracy. 

Please see more detailed ablation study discuss and additional ablation experiments in Appendix~\ref{sec:additional-ablation}.

\section{Conclusions}
In this work, we showed that contrastive self-supervised learning suffers from dimensional collapse, where the embedding vectors only span a lower-dimensional subspace.  We provided the theoretical understanding of this phenomenon and showed that there are two mechanisms causing dimensional collapse: strong augmentation and implicit regularization. Inspired by our theory, we proposed a novel contrastive self-supervised learning method \textit{DirectCLR} that directly optimizes the representation space without relying on a trainable projector. \textit{DirectCLR} outperforms SimCLR with a linear projector on ImageNet.

\section*{Acknowledgement}
We thank Yubei Chen, Jiachen Zhu, Adrien Bardes, Nicolas Ballas, Randall Balestriero, Quentin Garrido for useful discussions. We thank Wieland Brendel for the insightfull discussion on understanding the role of the projector.

\section*{Reproducibility Statement}
We provide detailed proof for all the lemmas and theorems in the Appendices. Code (in PyTorch) is available at  \url{https://github.com/facebookresearch/directclr}

\bibliography{citation}
\bibliographystyle{iclr2022_conference}

\appendix

\section{Useful Lemmas}

We adapt two useful lemmas from \cite{Arora2019ImplicitRI}.

\begin{lemma}
\label{lemma:arora-1}
Given a matrix $W$ and the dynamics that $W$ evolves by $\dot{W}$, the singular values of this matrix evolve by:
\begin{eqnarray}
\label{eqn:singular-dynamics}
\dot{\sigma^k} = {\textbf{u}^k}^T\dot{W}\textbf{v}^k
\end{eqnarray}
where $\textbf{u}^k$ and $\textbf{v}^k$ are singular value $\sigma^k$'s corresponding left and right singular vectors. i.e. the $k$-th column of matrices $U$ and $V$ respectively.
\end{lemma}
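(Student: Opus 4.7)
The plan is to start from the bilinear identity $\sigma^k = (\textbf{u}^k)^T W \textbf{v}^k$, which is immediate from the SVD $W = USV^T$ and the orthonormality of the columns of $U$ and $V$: indeed $(\textbf{u}^k)^T U = \mathbf{e}_k^T$ and $V^T \textbf{v}^k = \mathbf{e}_k$, so the product picks out the $(k,k)$-entry of the diagonal matrix $S$.

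Next I would differentiate this identity in time using the product rule,
\begin{equation*}
\dot{\sigma^k} = (\dot{\textbf{u}^k})^T W \textbf{v}^k + (\textbf{u}^k)^T \dot{W} \textbf{v}^k + (\textbf{u}^k)^T W \dot{\textbf{v}^k},
\end{equation*}
and argue that the first and third terms vanish. The key observations are the SVD relations $W\textbf{v}^k = \sigma^k \textbf{u}^k$ and $(\textbf{u}^k)^T W = \sigma^k (\textbf{v}^k)^T$, which reduce the first term to $\sigma^k (\dot{\textbf{u}^k})^T \textbf{u}^k$ and the third to $\sigma^k (\textbf{v}^k)^T \dot{\textbf{v}^k}$. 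Since $\textbf{u}^k$ and $\textbf{v}^k$ are unit vectors for all $t$, differentiating the constraints $\|\textbf{u}^k\|^2 = \|\textbf{v}^k\|^2 = 1$ yields $(\textbf{u}^k)^T \dot{\textbf{u}^k} = (\textbf{v}^k)^T \dot{\textbf{v}^k} = 0$, so both terms drop out and only the middle term survives, giving the claimed formula.

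The main obstacle is subtle: the proof implicitly assumes that the singular values and singular vectors can be chosen as differentiable functions of $t$. For simple (non-degenerate) singular values this is standard, since the singular subspaces depend analytically on $W$ and one can pick a smooth branch. In the degenerate case the individual $\textbf{u}^k, \textbf{v}^k$ may fail to be smooth, and a cleaner route is to work infinitesimally: at any fixed $t$ choose an SVD and apply the computation pointwise, so that the identity holds as a one-sided derivative along any admissible smooth selection. I would state this caveat explicitly and note that all later uses of the lemma (e.g.\ in the proofs of Theorems \ref{theorem:W-collapse} and \ref{theorem:sigma-evolve}) operate under the non-degenerate assumption already invoked in the paper, so the simple version suffices.
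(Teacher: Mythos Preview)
Your proof is correct and essentially identical to the paper's: both differentiate the SVD relation and use the unit-norm constraint $(\textbf{u}^k)^T\dot{\textbf{u}^k}=(\textbf{v}^k)^T\dot{\textbf{v}^k}=0$ to kill the extra terms, the only cosmetic difference being that the paper differentiates the full matrix identity $W=USV^T$ and then extracts the $(k,k)$ diagonal entry, whereas you differentiate the scalar identity $\sigma^k=(\textbf{u}^k)^TW\textbf{v}^k$ directly. Your added remark about the smoothness caveat in the degenerate case is a welcome point that the paper's proof leaves implicit.
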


\begin{proof}
Given a matrix $W$ and its singular value decomposition $W = USV^T$. We have the dynamics of the matrix
\begin{equation*}
    \dot{W} = \dot{U}SV^T + U\dot{S}V^T + US\dot{V}^T
\end{equation*}
Multiplying $U^T$ from the left and multiplying $V$ from the right, considering $U$ and $V$ are orthogonal matrices, we have
\begin{equation*}
    U^T\dot{W}V = U^T\dot{U}S+
    \dot{S}+
    S\dot{V}^TV
\end{equation*}
Since $S=diag(\sigma^k)$ is a diagonal matrix, we have
\begin{equation*}
     \dot{\sigma^k} = {\textbf{u}^k}^T\dot{W}\textbf{v}^k - {\textbf{u}^k}^T\dot{\textbf{u}^k}\sigma^k - \sigma^k \dot{\textbf{v}^k}^T\textbf{v}^k
\end{equation*}
Again, considering $\textbf{u}^k$ and $\textbf{v}^k$ have unit-norm, we have ${\textbf{u}^k}^T\dot{\textbf{u}^k}=0$
 and $\dot{\textbf{v}^k}^T\textbf{v}^k=0$.
Therefore, we derive
\begin{equation*}
    \dot{\sigma^k} = {\textbf{u}^k}^T\dot{W}\textbf{v}^k
\end{equation*}
\end{proof}

\begin{lemma}
\label{lemma:arora-2}

Given a matrix $W$ and the dynamics that $W$ evolves by $\dot{W}$, the singular vectors of this matrix evolve by:
\begin{eqnarray}
\label{eqn:dotU}
\dot{U} = U (H \odot (U^T\dot{W}VS + SV^T\dot{W}^TU)) \\
\label{eqn:dotV}
\dot{V} = V (H \odot (V^T\dot{W}^TUS + SU^T\dot{W}V))
\end{eqnarray}
where $\odot$ represents Hadamard element-wise multiplication. $H$ is a skew-symmetric matrix
\begin{equation}
\label{eqn:H}
    H^{k,k'} = 
    \begin{cases}
      1/({\sigma^k}^2 - {\sigma^{k'}}^2) & \text{if $k\neq k'$}\\
      0 & \text{if $k=k'$}
    \end{cases} 
\end{equation}
\end{lemma}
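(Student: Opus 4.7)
The plan is to differentiate the singular value decomposition $W = USV^T$ directly, exploit orthonormality of $U$ and $V$ to turn their derivatives into skew-symmetric matrices, and then solve a $2\times 2$ linear system at each off-diagonal index. Writing
\[
\dot{W} = \dot{U}SV^T + U\dot{S}V^T + US\dot{V}^T
\]
and left/right-multiplying by $U^T$ and $V$ yields $U^T\dot{W}V = AS + \dot{S} - SB$, where $A := U^T\dot{U}$ and $B := V^T\dot{V}$; the minus sign appears because differentiating $V^TV = I$ gives $\dot{V}^TV = -B$. The identities $U^TU = V^TV = I$ also imply $A^T = -A$ and $B^T = -B$, hence in particular $A_{kk} = B_{kk} = 0$. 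Recovering $A$ and $B$ is then enough, since $\dot{U} = UA$ and $\dot{V} = VB$.

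Next I would read off the off-diagonal entries of the displayed matrix identity and of its transpose. Taking transposes, $V^T\dot{W}^TU = -SA + \dot{S} + BS$ after applying skew-symmetry of $A$ and $B$. For any $k \neq k'$ this produces the coupled pair
\[
(U^T\dot{W}V)_{kk'} = \sigma^{k'}A_{kk'} - \sigma^{k}B_{kk'}, \qquad (V^T\dot{W}^TU)_{kk'} = -\sigma^{k}A_{kk'} + \sigma^{k'}B_{kk'},
\]
which is a $2\times 2$ linear system in $(A_{kk'}, B_{kk'})$ with determinant $(\sigma^{k'})^2 - (\sigma^{k})^2$. Under the implicit hypothesis that the singular values of $W$ are pairwise distinct, this determinant is non-zero. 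Solving, $A_{kk'}$ emerges as a linear combination of $(U^T\dot{W}V)_{kk'}$ with coefficient $\sigma^{k'}$ and of $(V^T\dot{W}^TU)_{kk'}$ with coefficient $\sigma^{k}$, divided by a factor of the form $(\sigma^{k})^2 - (\sigma^{k'})^2$ (with the sign fixed by the convention for $H$). The key observation is that the numerator is exactly $(U^T\dot{W}VS + SV^T\dot{W}^TU)_{kk'}$: the right-hand $S$ in each product supplies the missing $\sigma^{k'}$ or $\sigma^{k}$ factor. The denominator is $1/H_{kk'}$, and together with $A_{kk} = 0 = H_{kk}$ this gives $U^T\dot{U} = H \odot (U^T\dot{W}VS + SV^T\dot{W}^TU)$, and left-multiplication by $U$ yields the claimed formula for $\dot{U}$. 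The formula for $\dot{V}$ then follows symmetrically by applying the same derivation to $W^T = VSU^T$, or equivalently by swapping the roles of $U$ and $V$ throughout.

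The main obstacle is almost purely bookkeeping: tracking which of $\sigma^k$ versus $\sigma^{k'}$ multiplies $A_{kk'}$ versus $B_{kk'}$ after each transpose, ensuring that Hadamard multiplication by $H$ is matched to the correct combination of $U^T\dot{W}VS$ and $SV^T\dot{W}^TU$, and that the signs collapse into the skew-symmetric $H$ as written. A genuine subtlety, rather than an obstacle, is the hidden non-degeneracy assumption: $H$ is only well-defined when singular values are pairwise distinct, and along a trajectory where they cross the SVD has gauge freedom inside each degenerate block (any rotation inside such a block leaves $W$ unchanged), so $\dot{U}$ is not uniquely determined without a convention. With that caveat in place, each step is elementary linear algebra and the proof dovetails cleanly with the companion $\dot{\sigma}^k$ derivation from Lemma~3.
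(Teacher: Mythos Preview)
Your proposal is correct and follows essentially the same approach as the paper. Both differentiate the SVD, conjugate by $U^T$ and $V$, invoke skew-symmetry of $U^T\dot U$ and $\dot V^TV$, and combine the resulting relation with its transpose to decouple the two unknowns; the paper carries this out at the matrix level (right-multiply one equation by $S$, left-multiply the other by $S$, and add to cancel the $\dot V^TV$ term), while you solve the equivalent $2\times 2$ system entry-wise---a cosmetic rather than substantive difference, and your remark about the hidden non-degeneracy hypothesis is apt.
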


\begin{proof}
Same as proof for Lemma~\ref{lemma:W-gradient}, we start from the following equation
\begin{equation*}
    U^T\dot{W}V = U^T\dot{U}S+
    \dot{S}+
    S\dot{V}^TV
\end{equation*}
Considering the fact that $U^T\dot{U}$ and $\dot{V}^TV$ are skew-symmetric matrices, whose diagonal terms are all zero, we
Hadamard-multiply $\bar{I}$ to both sides of the equation. Here, $\bar{I}$ has all diagonal values equal zeros and all off-diagonal values equal to one, we have 
\begin{equation}
\label{eqn:proof-2-1}
    \bar{I} \odot U^T\dot{W}V = U^T\dot{U}S+
    S\dot{V}^TV
\end{equation}
Taking transpose, we have 
\begin{equation}
\label{eqn:proof-2-2}
    \bar{I} \odot V^T\dot{W}U = - SU^T\dot{U} -
    \dot{V}^TVS
\end{equation}
Right-multiplying $S$ to Eqn \ref{eqn:proof-2-1}
and left-multiplying $S$ to Eqn \ref{eqn:proof-2-2}, then adding them up, we have
\begin{equation*}
    U^T\dot{U}S^2 - S^2U^T\dot{U} = \bar{I} \odot (U^T\dot{W}VS + SV^T\dot{W}U) 
\end{equation*}
Therefore, we have 
\begin{equation*}
    \dot{U} = U (H \odot (U^T\dot{W}VS + SV^T\dot{W}^TU))
\end{equation*}
where 
\begin{equation*}
    H^{k,k'} = 
    \begin{cases}
      1/({\sigma^k}^2 - {\sigma^{k'}}^2) & \text{if $k\neq k'$}\\
      0 & \text{if $k=k'$}
    \end{cases}  
\end{equation*}
Similar proof applies to Eqn~\ref{eqn:dotV}.
\end{proof}

\begin{lemma}
[Alignment matrix dynamics]
\label{lemma:arora-3}
The alignment matrix $A$, defined by $A=V_2^TU_1$, evolves by:
\begin{equation}
\dot{A} = - A (H_1 \odot (A^TF + F^TA)) + (H_2 \odot (AF^T + FA^T)) A
\end{equation}
where $\odot$ represents Hadamard (element-wise) multiplication. $H_l$ is a skew-symmetric matrix, whose $(k, k')$-entry is given by
\begin{equation}
\label{eqn:H}
    H_l^{k,k'} = 
    \begin{cases}
      1/({\sigma_l^k}^2 - {\sigma_l^{k'}}^2) & \text{if $k\neq k'$}\\
      0 & \text{if $k=k'$}
    \end{cases} 
\end{equation}
and $F$ is defined by
\begin{equation}
    F = S_2 U_2^T G V_1 S_1
\end{equation}
\end{lemma}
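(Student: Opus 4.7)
The plan is to differentiate the definition $A = V_2^T U_1$ directly in time, namely $\dot{A} = \dot{V_2}^T U_1 + V_2^T \dot{U_1}$, and to evaluate each of the two pieces by invoking Lemma~\ref{lemma:arora-2} applied to $W_1$ and $W_2$ respectively. By Lemma~\ref{lemma:arora-2},
\begin{equation}
\dot{U_1} = U_1\bigl(H_1 \odot (U_1^T \dot{W_1} V_1 S_1 + S_1 V_1^T \dot{W_1}^T U_1)\bigr),
\end{equation}
and analogously for $\dot{V_2}$ in terms of $\dot{W_2}$. Then I substitute the gradient flow formulas from Lemma~\ref{lemma:W2-gradient}, $\dot{W_1} = -W_2^T G$ and $\dot{W_2} = -G W_1^T$, and use the SVDs $W_1 = U_1 S_1 V_1^T$, $W_2 = U_2 S_2 V_2^T$.

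The calculation collapses neatly once one observes the right groupings. For the $\dot{U_1}$ piece, $U_1^T\dot{W_1} V_1 S_1 = -(U_1^T V_2) S_2 U_2^T G V_1 S_1 = -A^T F$ by the definitions of $A$ and $F$, and similarly $S_1 V_1^T \dot{W_1}^T U_1 = -F^T A$. This yields $V_2^T \dot{U_1} = -A\bigl(H_1 \odot (A^T F + F^T A)\bigr)$, which is exactly the first term in the claim. For the $\dot{V_2}$ piece, the analogous substitutions give $V_2^T \dot{W_2}^T U_2 S_2 = -A F^T$ and $S_2 U_2^T \dot{W_2} V_2 = -F A^T$, so $\dot{V_2} = -V_2\bigl(H_2 \odot (AF^T + FA^T)\bigr)$.

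The only subtle step is taking the transpose of the $\dot{V_2}$ expression. Using $(X \odot Y)^T = X^T \odot Y^T$ together with the skew-symmetry of $H_2$ (so $H_2^T = -H_2$) and the symmetry of the bracket $AF^T + FA^T$ under transposition, I get $\dot{V_2}^T U_1 = \bigl(H_2 \odot (AF^T + FA^T)\bigr)A$, which provides the second term. Adding the two contributions gives the stated identity.

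The main obstacle is bookkeeping: keeping all four SVDs straight, tracking which transposes collapse via $H^T = -H$, and verifying that after substituting the chain-rule formulas from Lemma~\ref{lemma:W2-gradient} the resulting bilinear objects are exactly the quantities packaged into $A$ and $F$. Nothing deep is required beyond careful algebra; the choice of $F := S_2 U_2^T G V_1 S_1$ in the statement is precisely the combination that makes both substitutions fall out symmetrically, so once that is recognized the proof is essentially mechanical.
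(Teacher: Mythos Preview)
Your proposal is correct and follows essentially the same approach as the paper: differentiate $A=V_2^TU_1$, apply Lemma~\ref{lemma:arora-2} to each factor, substitute $\dot{W_1}=-W_2^TG$ and $\dot{W_2}=-GW_1^T$ from Lemma~\ref{lemma:W2-gradient}, and simplify via the SVDs to recognize $A$ and $F$. Your explicit handling of the transpose step for $\dot{V_2}^T$ via $H_2^T=-H_2$ and the symmetry of $AF^T+FA^T$ is in fact cleaner than the paper's presentation, which silently absorbs that sign flip into the substitution.
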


\begin{proof}

According to Lemma. \ref{lemma:arora-2}, we have 
\begin{eqnarray*}
\dot{U_1} = U_1 (H_1 \odot (U_1^T\dot{W_1}V_1S_1 + S_1V_1^T\dot{W}_1^TU_1)) \\
\dot{V_2} = V_2 (H_2 \odot (V_2^T\dot{W}_2^TU_2S_2 + S_2U_2^T\dot{W_2}V_2))
\end{eqnarray*}

Plugging the above two equations and Eqn~\ref{eqn:gW12}, the dynamics of the alignment matrix $A=V_2^TU_1$ can be written as
\begin{eqnarray*}
    \dot{A} &=&  V_2^T\dot{U}_1 + \dot{V}_2^TU_1\\
    &=&  V_2^TU_1 (H_1 \odot (U_1^T\dot{W}_1V_1S_1 + S_1V_1^T\dot{W}_1^TU_1)) + 
    (H_2 \odot (V_2^T\dot{W}_2^TU_2S_2 + S_2U_2^T\dot{W}_2V_2))^T V_2^TU_1 \\
    &=&  - A(H_1\odot(U_1^TW_2^TGV_1S_1 + S_1V_1^TG^TW_2U_1)) + (H_2\odot(S_2U_2^TGW_1^TV_2 + V_2^TW_1G^TU_2S_2)) A  \\
    &=&  -A(H_1\odot(U_1^TV_2S_2U_2^TGV_1S_1 + S_1V_1^TG^TU_2S_2V_2^TU_1))  \\ && + (H_2\odot(S_2U_2^T  GV_1S_1U_1^T V_2 + V_2^T  U_1S_1V_1^TG^TU_2S_2)) A \\ 
    &=&  -A(H_1\odot(A^TS_2U_2^TGV_1S_1 + S_1V_1^TG^TU_2S_2A)  \\ && + (H_2\odot(S_2U_2^T  GV_1S_1A^T + A S_1V_1^TG^TU_2S_2)) A \\ 
    &=& - A (H_1 \odot (A^TF + F^TA)) + (H_2 \odot (AF^T + FA^T)) A
\end{eqnarray*}
where 
\begin{equation*}
    F = S_2 U_2^T G V_1 S_1
\end{equation*}

\end{proof}

\begin{lemma}[Singular value dynamics]
\label{lemma:singular}
The singular values of the weight matrices $W_1$ and $W_2$ evolve by:
\begin{eqnarray}
\label{eqn:sigma1}
\dot{\sigma_1^k} = -\sum_{k'} ({\textbf{v}_2^{k'}}^T\textbf{u}_1^k)\sigma^{k'}_2 ({\textbf{u}_2^{k'}}^TG\textbf{v}_1^k) \\
\label{eqn:sigma2}
\dot{\sigma_2^k} = -\sum_{k'} ({\textbf{u}_1^{k'}}^T\textbf{v}_2^k)\sigma^{k'}_1 ({\textbf{u}_2^k}^TG\textbf{v}_1^{k'}) 
\end{eqnarray}
\end{lemma}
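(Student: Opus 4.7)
The plan is to derive both formulas by directly applying Lemma~\ref{lemma:arora-1} to each of the two weight matrices, then substituting the gradient flow expressions from Lemma~\ref{lemma:W2-gradient} and expanding the remaining weight matrix via its SVD. No new machinery is needed beyond what has already been established in the appendix.

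First, I would write $\dot\sigma_1^k = {\textbf{u}_1^k}^T \dot W_1 \textbf{v}_1^k$ by Lemma~\ref{lemma:arora-1}, then substitute $\dot W_1 = -W_2^T G$ from Lemma~\ref{lemma:W2-gradient} to obtain $\dot\sigma_1^k = -{\textbf{u}_1^k}^T W_2^T G \textbf{v}_1^k$. Expanding $W_2^T = V_2 S_2 U_2^T = \sum_{k'}\textbf{v}_2^{k'}\sigma_2^{k'}{\textbf{u}_2^{k'}}^T$, I insert this between $\textbf{u}_1^k$ and $G$, and the three scalar factors $({\textbf{v}_2^{k'}}^T\textbf{u}_1^k)$, $\sigma_2^{k'}$, and $({\textbf{u}_2^{k'}}^T G \textbf{v}_1^k)$ fall out, which is exactly Eqn~\ref{eqn:sigma1}.

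The derivation of Eqn~\ref{eqn:sigma2} is symmetric: apply Lemma~\ref{lemma:arora-1} to obtain $\dot\sigma_2^k = {\textbf{u}_2^k}^T \dot W_2 \textbf{v}_2^k$, substitute $\dot W_2 = -G W_1^T$ from Lemma~\ref{lemma:W2-gradient}, then expand $W_1^T = \sum_{k'}\textbf{v}_1^{k'}\sigma_1^{k'}{\textbf{u}_1^{k'}}^T$ on the right side of $G$. Again, the scalar factors $({\textbf{u}_1^{k'}}^T\textbf{v}_2^k)$, $\sigma_1^{k'}$, and $({\textbf{u}_2^k}^T G \textbf{v}_1^{k'})$ separate cleanly.

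There is no real obstacle here; this lemma is essentially a bookkeeping corollary of Lemma~\ref{lemma:arora-1} combined with Lemma~\ref{lemma:W2-gradient}. The only thing to be careful about is the index placement in the inner products involving $\textbf{v}_2^{k'}$, $\textbf{u}_1^k$ and their counterparts, since the two formulas are not fully symmetric in $k$ and $k'$: in Eqn~\ref{eqn:sigma1} the sum indexes the SVD expansion of $W_2^T$, whereas in Eqn~\ref{eqn:sigma2} it indexes that of $W_1^T$. Once those indices are tracked correctly, both identities fall out in one line each.
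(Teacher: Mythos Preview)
Your proposal is correct and matches the paper's own proof essentially line for line: the paper also invokes Lemma~\ref{lemma:arora-1} to write $\dot\sigma_1^k = {\textbf{u}_1^k}^T\dot W_1\textbf{v}_1^k$, substitutes $\dot W_1 = -W_2^T G$ from Eqn~\ref{eqn:gW12}, expands $W_2^T = V_2 S_2 U_2^T$ to obtain the sum, and then notes that the argument for Eqn~\ref{eqn:sigma2} is analogous.
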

\begin{proof}
According to Lemma~\ref{lemma:arora-1}, \begin{equation*}
     \dot{\sigma}_1^r = {\textbf{u}_1^r}^T\dot{W_1}\textbf{v}_1^r
\end{equation*}
Plugging in Eqn~\ref{eqn:gW12}, we have
\begin{eqnarray*}
    \dot{\sigma}_1^k &=& - {\textbf{u}_1^k}^TW_2^TG\textbf{v}_1^k \\
    &=& - {\textbf{u}_1^k}^TV_2S_2U_2^TG\textbf{v}_1^k \\
    &=& -\sum_{k'} ({\textbf{v}_2^{k'}}^T\textbf{u}_1^k)\sigma^{k'}_2 ({\textbf{u}_2^{k'}}^TG\textbf{v}_1^k)
\end{eqnarray*}
Similar proof applies to Eqn~\ref{eqn:sigma2}.
\end{proof}

%%%%%%%%%%%%%

\section{Delayed Proofs}

\subsection{Proof of Lemma~\ref{lemma:W-gradient}}
\label{sec:proof-W-gradient}

The gradient on matrix $W$ is
\begin{equation*}
    \frac{dL}{dW} = \sum_i(\frac{\partial L}{\partial \textbf{z}_i}\frac{\partial \textbf{z}_i}{\partial W} + \frac{\partial L}{\partial \textbf{z}_i'}\frac{\partial \textbf{z}_i'}{\partial W})
\end{equation*}
We denote the gradient on $\textbf{z}_i$ and $\textbf{z}_i'$ as $\textbf{g}_{\textbf{z}_i}$ and $\textbf{g}_{\textbf{z}_i'}$, respectively. Since $\frac{\partial \textbf{z}_i}{\partial W} = \textbf{x}_i$ and $\frac{\partial \textbf{z}_i'}{\partial W} = \textbf{x}_i'$, we get
\begin{equation*}
    \dot{W} = -(\frac{dL}{dW})^T = -\sum_i (\textbf{g}_{\textbf{z}_i}\textbf{x}_i^T + \textbf{g}_{\textbf{z}_i'}{\textbf{x}_i'}^T)
\end{equation*}

\subsection{Proof of Lemma~\ref{lemma:X}}
\label{sec:proof-X}
\begin{proof}
$X$ is defined in Eqn~\ref{eqn:X}. 
\begin{eqnarray*}
    X &=&  \sum_i(\sum_{j\neq i} \alpha_{ij}(\textbf{x}_i'-\textbf{x}_j) + \sum_{j\neq i}\alpha_{ji}(\textbf{x}_i-\textbf{x}_j))\textbf{x}_i^T - \sum_i(1-\alpha_{ii})(\textbf{x}_i'-\textbf{x}_i){\textbf{x}_i'}^T \\
    &=& \sum_i\sum_{j\neq i} \alpha_{ij}\textbf{x}_i'\textbf{x}_i^T - \sum_i\sum_{j\neq i}\alpha_{ij}\textbf{x}_j\textbf{x}_i^T + \sum_i\sum_{j\neq i}\alpha_{ji}(\textbf{x}_i-\textbf{x}_j)(\textbf{x}_i-\textbf{x}_j)^T \\&&+ \sum_i\sum_{j\neq i}\alpha_{ji}(\textbf{x}_i-\textbf{x}_j)\textbf{x}_j^T - \sum_i(1-\alpha_{ii})(\textbf{x}_i'-\textbf{x}_i)({\textbf{x}_i'}-\textbf{x}_i)^T - \sum_i(1-\alpha_{ii})(\textbf{x}_i'-\textbf{x}_i){\textbf{x}_i}^T
\end{eqnarray*}

Given the fact that $\sum_{j\neq i}\alpha_{ij} = 1 - \alpha_{ii}$, we have $\sum_i\sum_{j\neq i}\alpha_{ij} \textbf{x}_i'\textbf{x}_i^T= \sum_i(1-\alpha_{ii})\textbf{x}_i'\textbf{x}_i^T$. Also, since $\sum_i\sum_{j\neq i}$ iterates all pairs of $i, j$, we can replace the index between $i$ and $j$, we have $\sum_i\sum_{j\neq i}\alpha_{ij}\textbf{x}_j\textbf{x}_i^T = \sum_i\sum_{j\neq i}\alpha_{ji}\textbf{x}_i\textbf{x}_j^T$.

Therefore
\begin{equation*}
    X = \sum_i\sum_{j\neq i}\alpha_{ji}(\textbf{x}_i-\textbf{x}_j)(\textbf{x}_i-\textbf{x}_j)^T  - \sum_i(1-\alpha_{ii})(\textbf{x}_i'-\textbf{x}_i)({\textbf{x}_i'}-\textbf{x}_i)^T
\end{equation*}
\end{proof}

\subsection{Proof of Theorem \ref{theorem:W-collapse}}
\label{sec:proof-W-collapse}

% \begin{proof}
% According to Lemma~\ref{lemma:W-gradient}, we have
% \begin{equation}
% \label{eqn:dWW}
%     \frac{d}{dt}(W^TW) = XW^TW + W^TWX
% \end{equation}
% Let $K := W^TW$, and the dynamics becomes
% \begin{equation*}
%     \frac{d}{dt}vec(K) = KroneckerSum(X, I) * vec(K)
% \end{equation*}
% The eigenvalues of $KroneckerSum(X, I)$ are $\kappa_{ij} = \lambda_i + \omega_j$ where $\lambda_i$ are X's eigenvalues and $\omega_j=1$. 
% Therefore, we know that $KroneckerSum(X, I)$ has negative eigenvalues if $X$ has eigenvalues smaller than -1.

% Under the condition that $KroneckerSum(X, I)$ has negative eigenvalues, we know $K$ has vanishing singular values.

% Therefore, $W$ has vanishing singular values.

% \end{proof}

\begin{proof}
According to Lemma~\ref{lemma:W-gradient}, we have
\begin{equation}
\label{eqn:dW}
    \frac{d}{dt}W = WX
\end{equation}
For a fixed $X$, we solve this equation analyically, 
\begin{equation*}
    W(t) = W(0)\exp(Xt) 
\end{equation*}
Apply eigen-decomposition on $X$, $X = U\Lambda U^T$. Then we have $\exp(Xt) = U\exp(\Lambda t)U^T$.
Therefore, 
\begin{equation*}
    W(t) = W(0) U\exp(\Lambda t)U^T
\end{equation*}
Because $X$ has negative eigenvalues, i.e., $\Lambda$ has negative terms, we have for $t\rightarrow\infty$, $\exp(\Lambda t)$ is rank deficient. Therefore, we know that $W(\infty)$ is also rank deficient, 
the weight matrix $W$ has vanishing singular values.

\end{proof}

\subsection{Proof of Lemma~\ref{lemma:W2-gradient}}
\label{sec:proof-W2-gradient}
\begin{proof}
The gradient on matrix $W_2$ is
\begin{equation}
    \frac{dL}{dW_2} = \sum_i(\frac{\partial L}{\partial \textbf{z}_i}\frac{\partial \textbf{z}_i}{\partial W_2} + \frac{\partial L}{\partial \textbf{z}_i'}\frac{\partial \textbf{z}_i'}{\partial W_2})
\end{equation}
We denote the gradient on $\textbf{z}_i$ and $\textbf{z}_i'$ as $\textbf{g}_{\textbf{z}_i}$ and $\textbf{g}_{\textbf{z}_i'}$, respectively. Since $\frac{\partial \textbf{z}_i}{\partial W_2} = W_1\textbf{x}_i$ and $\frac{\partial \textbf{z}_i'}{\partial W_2} = W_1\textbf{x}_i'$, we get
\begin{equation}
    \dot{W_2} = -(\frac{dL}{dW_2})^T = -\sum_i (\textbf{g}_{\textbf{z}_i}\textbf{x}_i^T + \textbf{g}_{\textbf{z}_i'}{\textbf{x}_i'}^T)W_1^T
\end{equation}

Similar proof applies to $W_1$.

\end{proof}

\subsection{Proof of Theorem~\ref{theorem:W-align}}
\label{sec:proof-W-align}
Here, we prove that under the assumption that singular values are non-degenerate, the alignment matrix $A = V_2^TU_1$ converges to identity matrix.

\begin{proof}
According to Lemma \ref{lemma:W2-gradient}, we have 
\begin{eqnarray*}
\frac{d}{dt}(W_1W_1^T) = - W_1 G^T W_2 - W_2^T G W_1^T \\
\frac{d}{dt}(W_2^TW_2) =  - W_2^T G W_1^T - W_1 G^T W_2 \\
\end{eqnarray*}
therefore,
\begin{equation*}
    \frac{d}{dt}(W_1W_1^T - W_2^TW_2) = 0
\end{equation*}
or
\begin{equation*}
    W_1W_1^T - W_2^TW_2 = C
\end{equation*}

Next, we show that the Frobenius norm of each weight matrix grow to infinitely.
\begin{equation*}
    \frac{d}{dt}||W_1||_F^2 = \frac{d}{dt}tr(W_1 W_1^T) = -tr(W_2^TGW_1^T) - tr(W_1 G_1^T W_2)
\end{equation*}
According to Eqn~\ref{eqn:G2XX}, $G = - W_2 W_1 X$, we have
\begin{eqnarray*}
-tr(W_2^T G W_1^T) &=& tr(W_2^T W_2 W_1 X W_1^T ) \\
&=& tr(W_2W_1X W_1^T W_2^T) 
\end{eqnarray*}

% \begin{eqnarray}
% \frac{d}{dt}||W_1||_F^2 = 2tr(W_2W_1XW_1^TW_2^T) > 0
% \end{eqnarray}

Because $X$ is a positive definite matrix and for all $t$, $W_2(t)W_1(t) \neq 0$, we know $B := W_2 W_1 X W_1^T W_2^T$ is positive semi-definite and $B \neq 0$. Therefore, $tr(B) = \sum_k \lambda_k(B) > 0$ since not all eigenvalues of $B$ are zero.  

Therefore, we know $||W_1||_F^2 \rightarrow +\infty$ (similarly $||W_2||_F^2\rightarrow +\infty$). In the limit $t->+\infty$, we have 
\begin{equation*}
    W_1 W_1^T = W_2^T W_2
\end{equation*}

Plug in the singular value decomposition of $W_1$ and $W_2$, we have $U_1 S_1^2U_1^T = V_2 S_2^2 V_2^T$. Assuming $W_1$ and $W_2$ have non-degenerate singular values, due to the uniqueness of eigen-decomposition, we have 
\begin{equation*}
    U_1 = V_2
\end{equation*}
therefore, 
\begin{equation*}
    V_2^T U_1 = I
\end{equation*}
\end{proof}
\textbf{Remark}. Note that when the non-degenerate singular value assumption does not hold, the corresponding singular vectors are not unique and we will not observe the corresponding dimensions becoming aligned.

\subsection{Proof of Theorem~\ref{theorem:sigma-evolve}}
\label{sec:proof-sigma-evolve}
\begin{proof}
According to Theorem \ref{theorem:W-align}, for $\sigma_1^k$ and $\sigma_2^k$ with same index, the corresponding singular vector pairs $\textbf{v}_2^k$ and $\textbf{u}_1^k$ will get aligned, i.e., ${\textbf{v}_2^{k'}}^T\textbf{u}_1^k \rightarrow \delta_{i,j}$. Therefore, Eqn~\ref{eqn:sigma1} and Eqn~\ref{eqn:sigma2} can be simplified to
\begin{eqnarray*}
\dot{\sigma_1^k} \rightarrow -\sigma_2^k ({\textbf{u}_2^k}^TG\textbf{v}_1^k) \\ 
\dot{\sigma_2^k} \rightarrow -\sigma_1^k({\textbf{u}_2^k}^TG\textbf{v}_1^k)
\end{eqnarray*}

Insert Eqn~\ref{eqn:G2XX} and considering the alignment, we derive 
\begin{eqnarray*}
\dot{\sigma}_1^k \rightarrow \sigma_1^k  (\sigma_2^k)^2 ({\textbf{v}_1^k}^TX\textbf{v}_1^k) \\
\dot{\sigma}_2^k \rightarrow \sigma_2^k  (\sigma_1^k)^2 ({\textbf{v}_1^k}^TX\textbf{v}_1^k) 
\end{eqnarray*}
\end{proof}

\section{Effect of More Layers and Nonlinearity}
\label{sec:multilayer}
In our toy model, we focused on a two-layer linear MLP setting. Here, we empirically show that our theory extends to multilayer and nonlinear cases, as shown in Figure~\ref{fig:linear-spectrum}. 

Stronger over-parametrization leads to a stronger collapsing effect, which has been shown theoretically \citep{Arora2019ImplicitRI, Barrett2021ImplicitGR} and empirically \citep{Jing2020ImplicitRA}. This can be explained by the fact that more adjacent matrices getting aligned, and the collapsing in the product matrix gets amplified. Note that for a single-layer case, $L=1$, there is no dimensional collapse in the embedding space, which is consistent with our analysis.

\begin{figure}[h!]
    \centering
    \begin{subfigure}[b]{0.4\textwidth}
    \includegraphics[width=\textwidth]{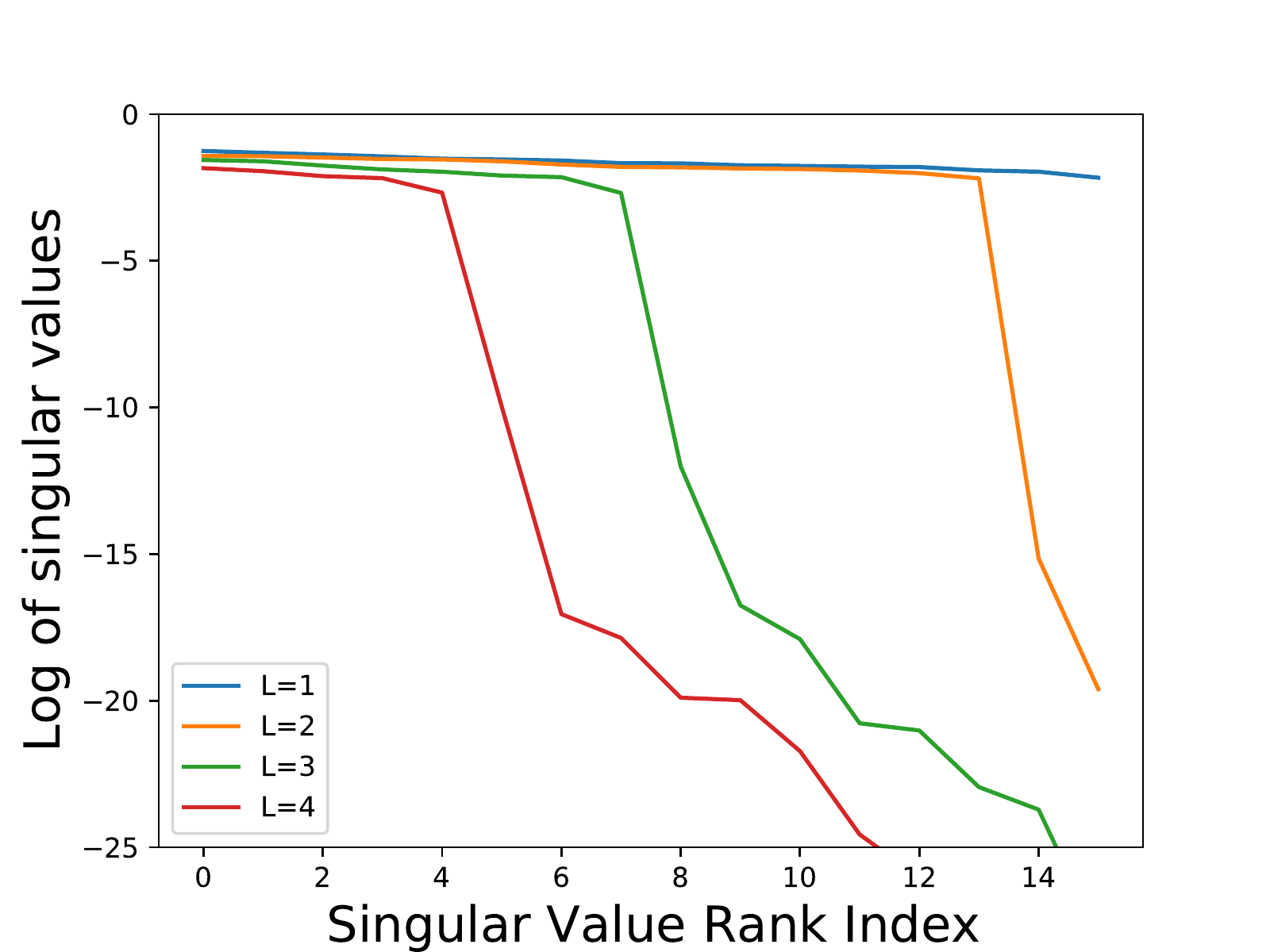}
    \caption{multiple layers}
    \label{fig:linear-spectrum}
    \end{subfigure}
    \begin{subfigure}[b]{0.4\textwidth}
    \includegraphics[width=\textwidth]{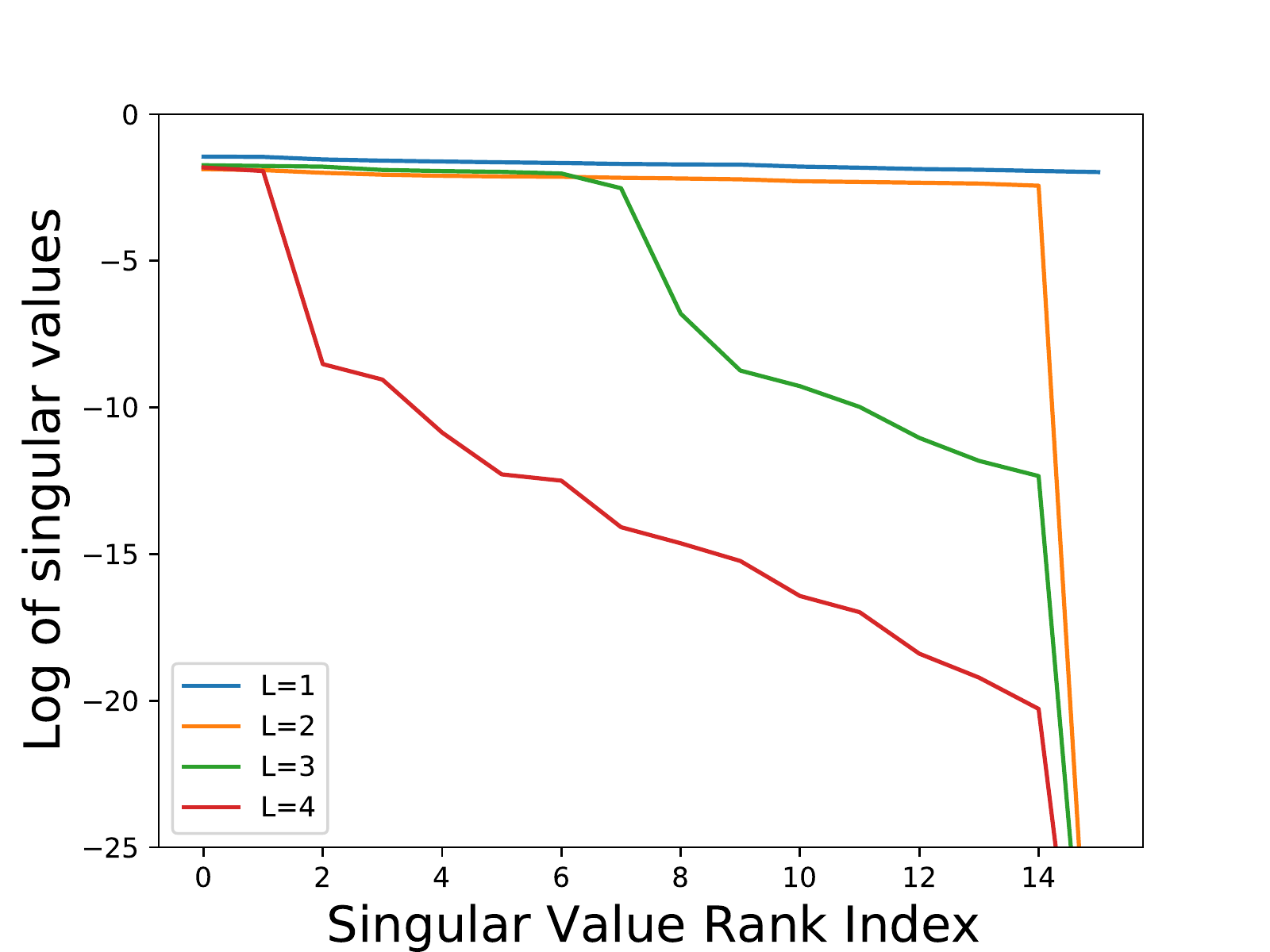}
    \caption{nonlinear}
    \label{fig:nonlinear-spectrum}
    \end{subfigure}
    \caption{\small Embedding space singular value spectrum with different layers on (a) linear and (b) nonlinear networks. All models use weight matrices with a size of 16x16. Adding more layers in the network leads to more collapsed dimensions. Adding nonlinearity leads to a similar collapsing effect.}
    \label{fig:morelayer-spectrum}
\end{figure}

We empirically show that the collapsing effect also applies to the nonlinear scenario. We insert ReLU between linear layers and observe a similar singular value collapse compared to the linear case. See Figure~\ref{fig:nonlinear-spectrum}.

\section{Implementation Detail}
\label{sec:detail}

\subsection{Augmentations} 

Each input image is transformed twice to produce the two distorted views for contrastive loss. The image augmentation pipeline includes random cropping, resizing to 224x224, random horizontal flipping, color jittering, grayscale, Gaussian blurring, and solarization. 

\subsection{Network}

Throughout the ImageNet experiments in this paper, we use a ResNet-50 \citep{he2016resnet} as an encoder. This network has an output of dimension 2048, which is called a representation vector. 

\subsection{Optimization}

We use a LARS optimizer and train all models for 100 epochs. The batch size is 4096, which fits into 32 GPUs during training. The learning rate is 4.8 as in SimCLR \citep{chen2020simclr}, which goes through a 10 epoch of warming up and then a cosine decay schedule.

\section{Hyperparameter tuning on $d_0$}

Here, we list the ImageNet accuracy with various $d_0$ value in Figure~\ref{fig:d0-hyperparameter}. It’s easy to see that when $d_0 \rightarrow 0$, there’s too little gradient information coming from the loss, the performance drops. When $d_0 \rightarrow 2048$, the model converges to standard SimCLR without a projector, which we know suffers from dimensional collapse in representation space.

% \begin{table}[h!]
%     \centering
%     \begin{tabular}{c|c}
%     \toprule
%         $d_0$ & Top-1 Accuracy \\
%         \midrule
% 32 & 59.092 \\
% 128 & 60.400 \\
% 256 & 60.808 \\
% 300 &
% 61.126 \\
% 340 &
% 61.944 \\
% 350 &
% 62.038 \\
% 360 &
% \textbf{62.104} \\
% 370 &
% 61.768 \\
% 380 &
% 61.826 \\
% 390 &
% 61.802 \\
% 420 &
% 61.856 \\
% 450 &
% 61.496 \\
% 768 &
% 60.860 \\
% 1024 &
% 59.740 \\
% 1280 &
% 59.088 \\
% 1536 & 
% 57.898 \\
% 1792 & 
% 55.824 \\
% 1920 & 
% 54.192 \\
% 2048 &
% 51.652 \\
%         \bottomrule

%     \end{tabular}
%     \caption{Hyperparameter tuning on $d_0$ based on ImageNet linear probe Top-1 accuracy.}
%     \label{tab:d0-hyperparameter}
% \end{table}

\begin{figure}[h!]
    \centering
    \includegraphics[width=0.5\textwidth]{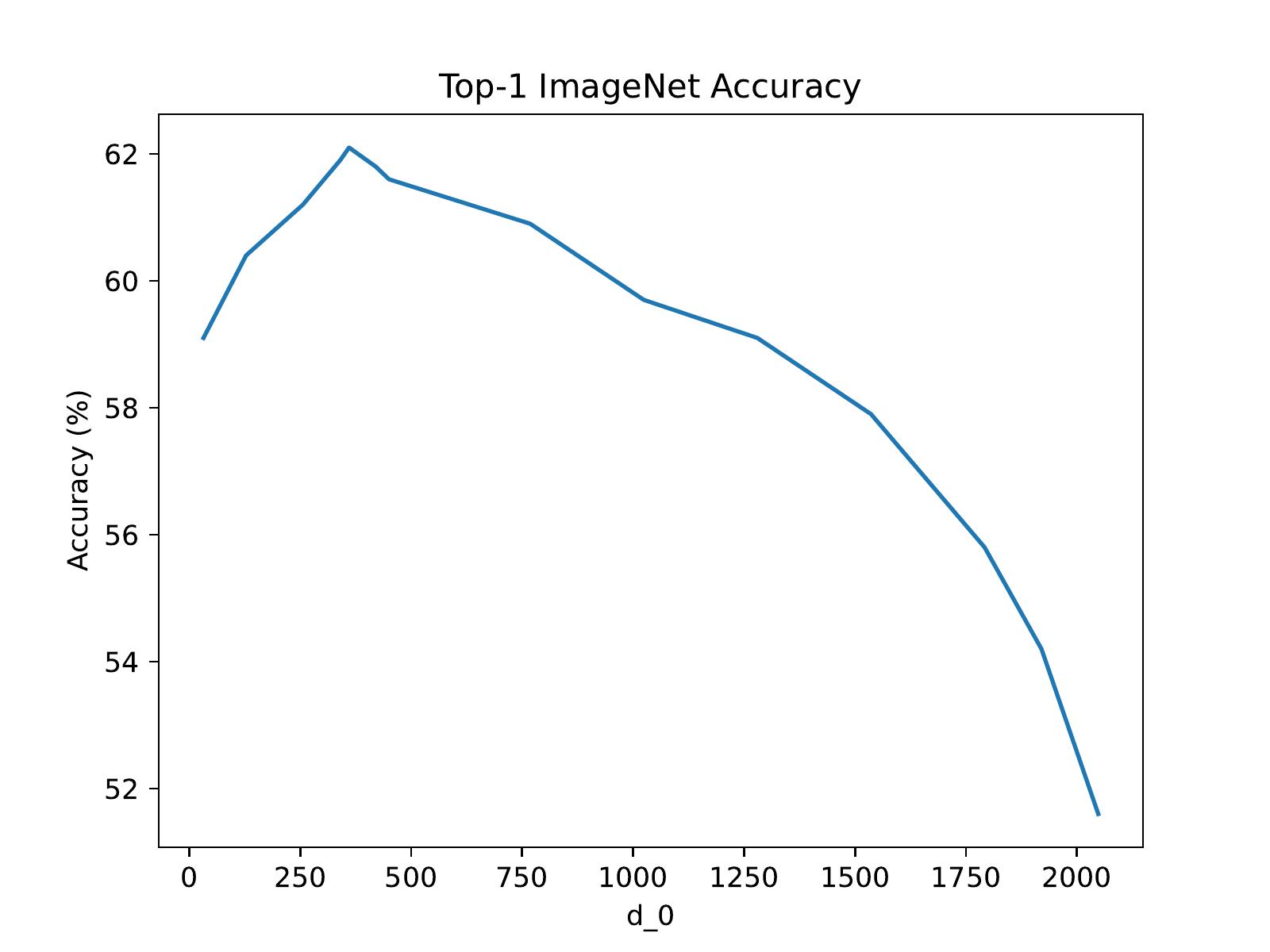}
    \caption{Hyperparameter tuning on $d_0$ based on ImageNet linear probe Top-1 accuracy.}
    \label{fig:d0-hyperparameter}
\end{figure}

\section{Ablation Study Detail}
\label{sec:additional-ablation}

\textbf{Fixed low-rank projector vs Fixed low-rank diagonal projector}:
\textit{DirectCLR} is equivalent to SimCLR with a fixed low-rank diagoanl projector. It performs the same as a SimCLR with fixed low-rank projector, which achieves $62.3\%$ linear probe accuracy. Specifically, the singular values of this low-rank matrix are set to have $d_0$ numbers of 1 and 0 for the rest, then left- and right- multiply a fixed orthogonal matrix. Therefore, their only difference is that this fixed projector has an extra fixed orthogonal matrix in between.

\textbf{Trainable projector vs trainable diagonal projector}: We trained a SimCLR model with a trainable projector that is constrained be diagonal. The model achieves $60.2\%$ linear probe accuracy on ImageNet, which is close to a SimCLR with a 1-layer linear projector. 
% This can also be explained by the alignment phenomenon that the projector gets aligned with the previous layer in the backbone. We suspect that initialization is the reason why this model is slightly worse than a SimCLR with a 1-layer linear trainable projector: the singular value of a random matrix is different from the uniform distribution.

\textbf{Orthogonal projector vs no projector}: We train a single layer projector SimCLR model with orthogonal constraint using ExpM parametrization  \citep{Casado2019CheapOC}. Therefore, the projector weight matrix has all singular values fixed to be 1. This model reaches $52.2\%$ accuracy on ImageNet which is close to a SimCLR without projector.  

These ablation studies verify the propostion 1 that the SimCLR projector only needs to be diagonal. Also, according to  Table~\ref{tbl:ablation}, we find that low-rank projector setting consistently improves the performance, which verifies proposition 2.

\textbf{Linear probe on subvector instead of the entire vector}:
For $\textit{DirectCLR}$, we perform a linear probe only on the sub-vector $\textbf{z}$ and get $47.9\%$ accuracy on ImageNet. This shows that the rest of $\textbf{r}$ still contains useful information even though it does not see gradient directly coming from the loss function.

\textbf{Random dropout instead of fixed subvector}: Since {\it DirectCLR} drops out a number of dimensions for the loss function, it would be natural to ask whether random dropping out can reach the same performance. We train a SimCLR model without a projector and randomly feed $d_0$ number of features to InfoNCE loss every iteration. This model reaches only $43.0\%$ accuracy on ImageNet. This demonstrates the importance of applying a fixed subvector, which allows the alignment effect to happen.

\end{document}